\documentclass{article}

\usepackage{microtype}
\usepackage{graphicx}
\usepackage{subcaption}
\usepackage{booktabs} 
\usepackage{array}

\usepackage{hyperref}

\usepackage[preprint]{icml2026}

\usepackage{amsmath}
\usepackage{amssymb}
\usepackage{mathtools}
\usepackage{amsthm}
\usepackage{multirow}

\usepackage[capitalize,noabbrev]{cleveref}

\newcommand{\textproc}[1]{\textsc{#1}}
\renewcommand{\algorithmiccomment}[1]{\hfill $\triangleright$ #1}

\theoremstyle{plain}
\newtheorem{theorem}{Theorem}[section]
\newtheorem{proposition}[theorem]{Proposition}

\theoremstyle{definition}

\theoremstyle{remark}

\usepackage[disable,textsize=tiny]{todonotes}

\icmltitlerunning{Formalizing the Sampling Design Space of Diffusion-Based Generative Models}

\begin{document}

\twocolumn[
  \icmltitle{Formalizing the Sampling Design Space of Diffusion-Based Generative Models via Adaptive Solvers and Wasserstein-Bounded Timesteps}



  \icmlsetsymbol{equal}{*}

  \begin{icmlauthorlist}
    \icmlauthor{Sangwoo Jo}{sch}
    \icmlauthor{Sungjoon Choi}{sch}
  \end{icmlauthorlist}

  \icmlaffiliation{sch}{Department of Artificial Intelligence, Korea University, Seoul, South Korea}

  \icmlcorrespondingauthor{Sungjoon Choi}{sungjoonc@korea.ac.kr}

  \icmlkeywords{Machine Learning, ICML}

  \vskip 0.3in
]



\printAffiliationsAndNotice{}  

\begin{abstract}
Diffusion-based generative models have achieved remarkable performance across various domains, yet their practical deployment is often limited by high sampling costs. While prior work focuses on training objectives or individual solvers, the holistic design of sampling, specifically solver selection and scheduling, remains dominated by static heuristics. In this work, we revisit this challenge through a geometric lens, proposing SDM, a principled framework that aligns the numerical solver with the intrinsic properties of the diffusion trajectory. By analyzing the ODE dynamics, we show that efficient low-order solvers suffice in early high-noise stages while higher-order solvers can be progressively deployed to handle the increasing non-linearity of later stages. Furthermore, we formalize the scheduling by introducing a Wasserstein-bounded optimization framework. This method systematically derives adaptive timesteps that explicitly bound the local discretization error, ensuring the sampling process remains faithful to the underlying continuous dynamics. Without requiring additional training or architectural modifications, SDM achieves state-of-the-art performance across standard benchmarks, including an FID of 1.93 on CIFAR-10, 2.41 on FFHQ, and 1.98 on AFHQv2, with a reduced number of function evaluations compared to existing samplers. Our code is available at \url{https://github.com/aiimaginglab/sdm}.

\end{abstract}

\section{Introduction}
Diffusion-based generative models have established themselves as the premier framework for high-fidelity synthesis, defining the state-of-the-art in image generation \cite{ho2020denoising, song2020score, rombach2022high}, video synthesis \cite{ho2022video, singer2022make}, and 3D content creation \cite{poole2022dreamfusion, lin2023magic3d}.
Fundamentally, these models cast generation as a differential equation problem: sampling is equivalent to numerically integrating a Probability Flow ODE (PF-ODE) that transports a simple prior distribution to the complex data distribution.

However, this theoretical elegance comes with a practical bottleneck.
To minimize discretization error, the ODE must be simulated with fine-grained steps, leading to prohibitively high computational costs (measured in Number of Function Evaluations, NFE).
While distillation methods \cite{salimans2022progressive, meng2023distillation} and Consistency Models \cite{song2023consistency, luo2023latent, wang2024phased, kim2023consistency, yang2024consistency} offer few-step generation, they require expensive retraining or fine-tuning pipelines.
For the vast majority of pre-trained models, the central challenge remains: \textit{How can we solve the generative ODE as efficiently as possible without modifying the model itself?}

Current approaches to this integration problem are largely dominated by static heuristics. Standard solvers (e.g., Euler, Heun) and schedules (e.g., linear, log-SNR, EDM \cite{karras2022elucidating}) treat the diffusion trajectory as homogeneous, applying the same numerical precision and step-size logic from start to finish.
This ignores a critical geometric reality: the stiffness of the diffusion ODE is not constant.
As we demonstrate in this work, the trajectory evolves from a nearly linear flow in high-noise regions (dominated by the prior) to a highly non-linear, curved path as it converges onto the low-dimensional data manifold.
Treating these distinct regimes with a uniform strategy is inherently suboptimal.

In this work, we formalize the sampling design space through a geometric lens, proposing \textbf{SDM}, a principled framework that adapts the numerical solver and schedule to the intrinsic properties of the diffusion trajectory. Our contributions are two-fold:

\textbf{(i) Curvature Analysis and Solver Allocation.}
We provide a rigorous analysis of the PF-ODE curvature under standard parameterizations.
As illustrated in \cref{fig:main_figure}, the trajectory curvature is negligible during early sampling stages but spikes significantly near the data manifold ($t \to 0$).
Based on this, we propose a dynamic allocation strategy that deploys efficient first-order solvers early and transitions to stable higher-order solvers only when geometrically required.
We provide theoretical justification for this solver allocation strategy and empirically validate its effectiveness in \cref{sec:experiments}. 

\begin{figure*}[t]
  \centering
  \includegraphics[width=1.0\linewidth]{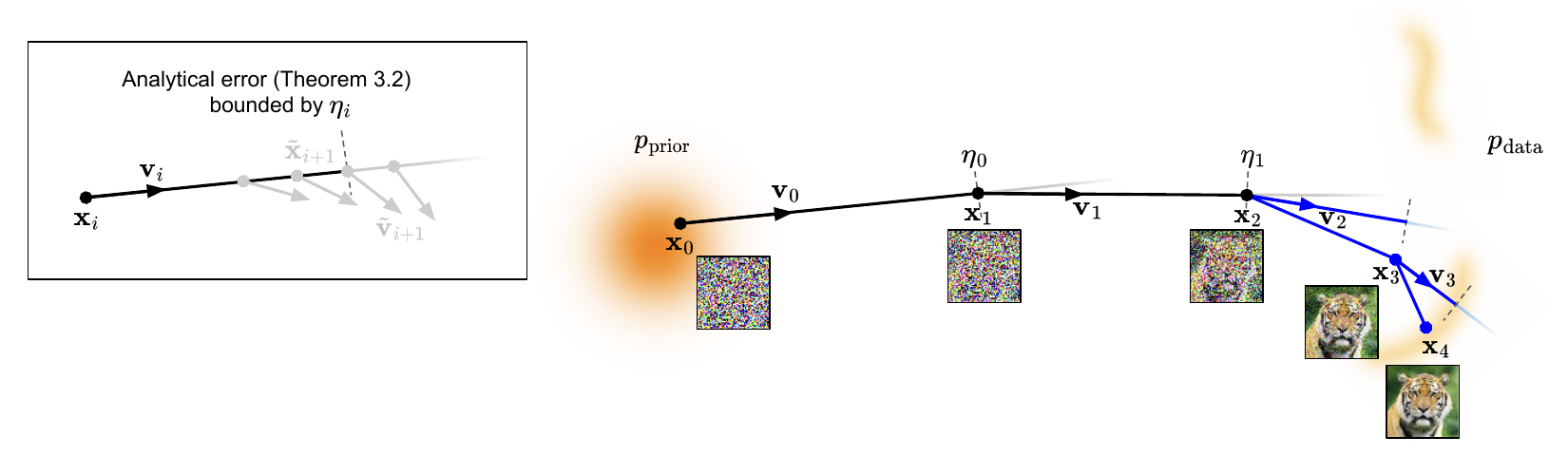}
  \caption{Overview of SDM. \textbf{(Inset)} We introduce an adaptive timestep optimization derived from an analytical upper bound on the Wasserstein error. Here, $\eta_i$ serves as a controllable schedule that explicitly governs the allowable error budget at each step, tightening step sizes where the flow is most sensitive. \textbf{(Main)} Complementing this, SDM adapts the solver order to the trajectory's geometry. In the early high-noise regime (black), the flow is nearly linear, allowing efficient low-order integration. As the trajectory approaches the data manifold \textcolor{blue}{(blue)}, the curvature increases, necessitating high-order solvers. These two strategies act as independent and complementary components, jointly improving both sample quality and computational efficiency.}
  \label{fig:main_figure}
\end{figure*}

\textbf{(ii) Wasserstein-Bounded Adaptive Scheduling.}
Moving beyond heuristic timestep selection, we introduce an optimization framework grounded in optimal transport.
We derive an adaptive schedule that explicitly bounds the local Wasserstein distance between the discretized trajectory and the true continuous flow.
We further introduce an \textit{$N$-step resampling} procedure that projects this adaptive path onto a fixed NFE budget, allowing flexible control over the quality-efficiency trade-off.

Crucially, SDM is a \textit{training-free} framework.
It can be plugged into existing pre-trained diffusion models (e.g., EDM, Stable Diffusion) to immediately improve the Pareto frontier of quality versus efficiency.
We demonstrate that SDM achieves state-of-the-art performance across standard benchmarks, including an FID of 1.93 on CIFAR-10 and 1.98 on AFHQv2, surpassing established baselines that rely on static heuristics.

\section{Preliminaries}
\label{sec:preliminaries}

\subsection{Probability Flow ODE}
Diffusion models is a generative model framework that gradually injects noise into data through a forward process and subsequently removes noise through a learned reverse process. The following generative process can be formulated as a stochastic differential equation (SDE) parameterized by a noise schedule, where $s(t)$ and $\sigma(t)$ denote the scale and noise term, respectively. 

In addition to the stochastic reverse-time SDE, there exists a deterministic \textit{probability flow ODE} that produces the same marginal distribution as the reverse SDE \cite{song2020score}: 
\begin{equation}
\label{eq:pf_ode}
dx = \left[\frac{\dot{s}(t)}{s(t)}x - s(t)^2 \dot{\sigma}(t) \sigma(t) \nabla_x \log p\left(\frac{x}{s(t)}; \sigma(t)\right)\right] dt.
\end{equation}
Sampling from diffusion models is performed by discretizing this ODE into a finite number of steps from prior distribution at $t=T$ to the data distribution at $t=0$ using numerical intergration methods.

\subsection{Parameterizations}
Since the introduction of diffusion models, various parameterizations of the model output have been proposed. $x$-prediction and $\epsilon$-prediction formulation has first been introduced \cite{ho2020denoising}, where the later expressed as a reparameterization of the original training objective. Salimans and Ho \cite{salimans2022progressive} further proposed $v$-prediction, where the model predicts a linear combination of data and noise, improving stability and sample quality. Flow-based generative models \cite{albergo2022building, lipman2022flow, liu2022flow} can also be interpreted within the $v$-prediction model framework, and recent work has actively explored the connection between diffusion models and flow-based formulations \cite{esser2024scaling, lipman2022flow}.

Despite the popularity of alternative parameterizations, recent work \cite{li2025back} has re-emphasized the importance of $x$-prediction, stating that directly predicting the clean image is more suitable for modeling low-dimensional manifolds embedded in high-dimensional space. This motivates us to reconsider the design space of EDM \cite{karras2022elucidating}, which establishes a strong baseline by incorporating $x$-prediction formulation for training and sampling with additional pre-conditioning scheme. 

\subsection{Numerical Integration and Scheduling}
Solving \cref{eq:pf_ode} requires discretizing the time interval $[0, T]$ into a sequence $\{t_i\}_{i=0}^N$.
The sample quality is governed by two coupled design choices: numerical solver and timestep schedules.

\textbf{Numerical Solvers.}
Solvers navigate the trade-off between local precision and computational cost (NFE).
First-order methods (e.g., Euler) are computationally cheap (1 NFE/step) but incur a local truncation error of $O(h^2)$.
To improve precision without increasing steps, high-order solvers such as DPM-Solver++ \cite{lu2025dpm}, UniPC \cite{zhao2023unipc}, and DEIS \cite{zhang2022fast} employ higher-order Taylor approximations or predictor-corrector schemes.
While these methods significantly reduce discretization error ($O(h^k)$ for $k \ge 2$), they typically require complex multi-step history or additional evaluations per step, making their uniform application across the entire trajectory potentially inefficient.

\textbf{Timestep Scheduling.}
Diffusion timesteps $\{t_i\}_{i=0}^N$ determines the step size $h_i$ and the noise levels $\sigma(t_i)$.
Early works employed heuristic time-based schedules, such as linear \cite{ho2020denoising} or cosine schedules \cite{nichol2021improved}.
More recently, Karras et al. \cite{karras2022elucidating} proposed a noise-level based polynomial schedule (EDM), $\sigma_i = \left( \sigma_{\text{max}}^{\frac{1}{\rho}} + \frac{i}{N-1} \left( \sigma_{\text{min}}^{\frac{1}{\rho}} - \sigma_{\text{max}}^{\frac{1}{\rho}} \right) \right)^\rho$, which has become a standard baseline due to its empirical robustness. Subsequent studies have explored timestep optimization based on various criteria including score-based measures \cite{williams2024score, xue2024accelerating}, KL divergence \cite{sabour2024align, park2024textit, li2023autodiffusion}, and Wasserstein-based error estimates \cite{hu2024adaflow}. However, there does not exist a unified framework that integrates solver selection and timestep allocation based on the intrinsic geometry of the diffusion trajectory. Motivated by this gap, we propose a unified sampling design space that characterizes both of these dimensions. Both components can be optimized independently for pre-trained diffusion models without additional training, while their combination yields further improvements in sampling
efficiency and sample quality.

\section{SDM: Sampling via Adaptive Solvers and Timestep Schedulers}
We investigate the sampling design space of diffusion-based generative models by rigorously formalizing the trade-off between numerical precision and computational cost. While prior works often rely on static heuristics, we propose a geometric framework, \textit{SDM}, which aligns the numerical solver with the intrinsic properties of the diffusion trajectory.

In \cref{subsec:adaptive_ode_solvers}, we analyze the local stiffness of the probability flow ODE. While state-of-the-art exponential integrators (e.g., DPM-Solver++ \cite{lu2025dpm}) achieve efficiency by analytically solving the linear component of the diffusion ODE, they fundamentally rely on the assumption that the non-linear residual (the effective score term) varies smoothly. Our analysis generalizes this by explicitly quantifying the curvature of this residual, demonstrating that low-order solvers, such as Euler method, are sufficient for high-noise regimes, while higher-order corrections become necessary only in later stages. In \cref{subsec:adaptive_scheduling}, we introduce a Wasserstein-bounded adaptive strategy that optimizes the timestep schedule to minimize discretization error under a fixed computational budget. We empirically show that this leads to consistent improvements in sample quality. 

\subsection{Adaptive ODE Solvers}
\label{subsec:adaptive_ode_solvers}

\subsubsection{Analysis of ODE curvature} 
\label{subsec:ode_curvature_analysis}
To design an efficient adaptive sampling strategy, we must first quantify the geometric properties of the diffusion trajectory that govern numerical stability. The local truncation error (LTE) of any numerical solver is strictly bounded by the magnitude of the trajectory's higher-order derivatives, specifically the acceleration $\|\ddot{x}\|$. This quantity serves as a direct proxy for the \textit{local stiffness} of the probability flow ODE \cite{10.5555/3722577.3722688}.

Exponential integrators mitigate stiffness by explicitly solving the linear drift term. However, the discretization error is then dominated by the higher-order derivatives of the neural network prediction itself (the non-linear noise residual). Our curvature analysis focuses on this residual complexity, providing a solver-agnostic measure of difficulty that applies regardless of the integrator's baseline order.

In \cref{thm:second_curvature}, we derive the closed-form expression of $\|\ddot{x}\|$ for standard diffusion parameterizations.
\begin{theorem}
\label{thm:second_curvature}
Under $\text{EDM}/\text{VP}/\text{VE}$ parameterizations, the second derivative of the probability flow ODE can be expressed as follows:

\textbf{EDM}
\begin{equation}
\label{eq:edm_sec_derivative}
\ddot{x} = - \frac{1}{\sigma^2}J_D(x-D_{\theta}) - \frac{1}{\sigma} D_{\sigma},
\end{equation}
\textbf{VP}
\begin{equation}
\begin{aligned}
\ddot x
&=\left(\frac14B^2-\frac12\beta_d\right)x
+\left(\frac{\ddot\sigma}{\sigma}-B\frac{\dot\sigma}{\sigma}\right)(x-sD_\theta)\\
&\quad-\dot\sigma\left(\frac12\,B\frac{s}{\sigma^3}\right)J_D(x-sD_\theta) \\
&-\dot\sigma\left[\frac{s^2}{\sigma}\left(\frac14B^2-\frac12\beta_d\right)\right]J_DD
-\dot\sigma\left(\frac{\dot\sigma s}{\sigma}\right)D_\sigma,
\end{aligned}
\end{equation}
\textbf{VE}
\begin{equation}
\label{eq:ve_sec_derivative}
\ddot{x} = - \frac{1}{4\sigma^4}(I+J_D)(x-D_{\theta}) - \frac{1}{4\sigma^3} D_{\sigma},
\end{equation}
\end{theorem}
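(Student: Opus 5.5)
The plan is to differentiate the probability flow ODE \cref{eq:pf_ode} along its own trajectory, rewriting the score in terms of the denoiser. For each parameterization I first write the first-order field $\dot x = f(x,t)$ with $\nabla_x \log p(x/s;\sigma) = (s D_\theta(x/s;\sigma) - x)/(s^2\sigma^2)$, taken with the appropriate scaling. I then apply the chain rule
\[
\ddot x = \partial_t f + (\nabla_x f)\,\dot x .
\]
Throughout, $D_\theta$ depends on $x$ and $t$ (through $\sigma(t)$, and through $s(t)$ in VP). Its total derivative is therefore $\frac{d}{dt}D_\theta = J_D\dot x + D_\sigma\dot\sigma$, where $J_D=\nabla_x D_\theta$ and $D_\sigma=\partial_\sigma D_\theta$. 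The statement's formulas come from substituting $\dot x$ back in and collecting terms of the forms $(x-D_\theta)$, $J_D(x-D_\theta)$ and $D_\sigma$.

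\textbf{EDM case.} Here $s\equiv 1$ and $\sigma(t)=t$, so $\dot\sigma=1$ and $\dot x = (x-D_\theta)/\sigma$. Differentiating gives
\[
\ddot x = \frac{\dot x - J_D\dot x - D_\sigma}{\sigma} - \frac{x-D_\theta}{\sigma^2}.
\]
The identity part of $\dot x/\sigma$ equals $(x-D_\theta)/\sigma^2$ and cancels against the last term. What remains is $-\frac{1}{\sigma^2}J_D(x-D_\theta) - \frac{1}{\sigma}D_\sigma$, which is \cref{eq:edm_sec_derivative}.

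\textbf{VE case.} I take $s\equiv1$ and $\sigma(t)=\sqrt t$, so $\dot\sigma = 1/(2\sigma)$, $\ddot\sigma = -1/(4\sigma^3)$ and $\dot x = \dot\sigma(x-D_\theta)/\sigma = (x-D_\theta)/(2\sigma^2)$. The same chain rule gives
\[
\ddot x = \frac{(I-J_D)\dot x - \dot\sigma D_\sigma}{2\sigma^2} - \frac{\dot\sigma}{\sigma^3}(x-D_\theta).
\]
I then substitute $\dot x$ and $\dot\sigma$ and simplify. The coefficient bookkeeping is the thing to verify carefully. The sign in front of the identity and $J_D$ terms should reproduce the stated $-(I+J_D)/(4\sigma^4)$ grouping together with the $-D_\sigma/(4\sigma^3)$ term. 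If the identity contributions do not combine as stated, I will recheck the convention for $\sigma(t)$ before proceeding.

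\textbf{VP case, the main obstacle.} Here $s(t)=1/\sqrt{e^{\frac12\beta_d t^2+\beta_{\min}t}}$. I set $B(t)=\beta_d t+\beta_{\min}$, so that $\dot s/s=-\frac12 B$ and $\dot B=\beta_d$, and $\sigma(t)=\sqrt{e^{\frac12\beta_d t^2+\beta_{\min}t}-1}$. The first-order field is
\[
\dot x = -\tfrac12 B x + \frac{\dot\sigma}{\sigma}\bigl(x - sD_\theta\bigr)
\]
(with the $s$ factors placed consistently). Differentiating produces three groups of terms:
\begin{itemize}
\item from $-\frac12 Bx$, the terms $-\frac12\beta_d x - \frac12 B\dot x$;
\item from the coefficient $\dot\sigma/\sigma$, the term $(\ddot\sigma/\sigma - \dot\sigma^2/\sigma^2)(x-sD_\theta)$;
\item from $\frac{d}{dt}(x-sD_\theta) = \dot x - \dot s D_\theta - s(J_D\dot x + D_\sigma\dot\sigma)$, the remaining terms.
\end{itemize}
Substituting $\dot x$ again yields the $\frac14B^2 x$ term and the mixed coefficient $B\dot\sigma/\sigma$. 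I will then use the VP identities $s^2(1+\sigma^2)=1$ and $\dot\sigma\sigma = \frac12 B(1+\sigma^2)$ to trade $\dot\sigma^2/\sigma^2$ and $\dot s$ for expressions in $B$, $s$ and $\sigma$. These identities are what should reduce the expansion to the five stated terms. I expect this algebraic regrouping, especially matching the $J_D D$ and $J_D(x-sD_\theta)$ coefficients, to be the hardest part. I will organize it by first collecting all terms without Jacobians, then all $J_D$ terms, and finally the $D_\sigma$ term.
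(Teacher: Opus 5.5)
Your plan breaks at the VP step where you expect the identities to ``reduce the expansion to the five stated terms''. The stated coefficient of $J_D D_\theta$ is wrong, so no regrouping can produce it. Use $D_\theta=D_\theta(x;\sigma)$, the convention of the paper's appendix. In your own expansion, Jacobians enter only through $-\frac{\dot\sigma}{\sigma}\,s\,J_D\dot x$. Substitute $\dot x=-\frac12Bx+\frac{\dot\sigma}{\sigma}(x-sD_\theta)$, split $J_Dx=J_D(x-sD_\theta)+sJ_DD_\theta$, and use $\frac{\dot\sigma}{\sigma}-\frac12 B=\frac{B}{2\sigma^2}$. This gives
\[
-\dot\sigma\,\frac{Bs}{2\sigma^3}\,J_D(x-sD_\theta)\;+\;\dot\sigma\,\frac{s^2}{\sigma}\cdot\frac{B}{2}\,J_DD_\theta .
\]
The first term matches the statement. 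The second has $\frac12B$ where the statement has $\frac12\beta_d-\frac14B^2$, and these are different functions of $t$.

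A linear denoiser $D_\theta=ax$ settles it. Then $\dot x=kx$ with $k=-\frac12B+\frac{\dot\sigma}{\sigma}(1-sa)$. The $a^2$-coefficient of $\ddot x=(\dot k+k^2)x$ is $s^2\dot\sigma^2/\sigma^2$, which the corrected formula reproduces and the stated one does not.

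The paper's own general formula correctly contains $-\dot\sigma\frac{\dot s s}{\sigma}J_DD_\theta$. Its VP specialization substituted $\ddot s/s=\frac14B^2-\frac12\beta_d$ where $\dot s/s=-\frac12B$ belongs. So the VP identity as stated is false. Your method honestly yields the VP formula with $+\dot\sigma\frac{Bs^2}{2\sigma}J_DD_\theta$, and the other four VP terms come out exactly as stated.

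The rest of your proposal is sound. It is essentially the paper's computation done case by case; the paper first derives a general $\ddot x$ for arbitrary $(s,\sigma)$ via $\epsilon_\theta=(x-sD_\theta)/\sigma$, then specializes. Your EDM derivation is complete. Your VE hedge is unnecessary: with $\dot\sigma=1/(2\sigma)$ your intermediate equals $-\frac{1}{2\sigma^4}(x-D_\theta)+\frac{1}{4\sigma^4}(I-J_D)(x-D_\theta)-\frac{1}{4\sigma^3}D_\sigma$. That is exactly the stated $-\frac{1}{4\sigma^4}(I+J_D)(x-D_\theta)-\frac{1}{4\sigma^3}D_\sigma$.

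Do fix one inconsistency. You write the score with $D_\theta(x/s;\sigma)$ but then use $\frac{d}{dt}D_\theta=J_D\dot x+D_\sigma\dot\sigma$, which is valid only for $D_\theta(x;\sigma)$. If the argument really is $x/s$, then $\frac{d}{dt}(x/s)=\frac{\dot\sigma}{\sigma s}(x-sD_\theta)$. All Jacobian terms then collapse to $-\frac{\dot\sigma^2}{\sigma^2}J_D(x-sD_\theta)$ with no separate $J_DD_\theta$ term, which again differs from the stated form.
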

where $J_D$ and $D_\sigma$ denote the Jacobian with respect to $x$ and the partial derivative with respect to $\sigma$, respectively. Detailed proof and closed-form expressions for $B(t)$, $\dot\sigma(t)$, and $\ddot\sigma(t)$ under the VP formulation are provided in Appendix~\ref{appendix:ode_derivation}.

The analytical forms in \eqref{eq:edm_sec_derivative}--\eqref{eq:ve_sec_derivative} imply that the probability flow is nearly linear during the early high-noise stages. However, its curvature increases rapidly as the process approaches the data manifold ($t \rightarrow 0$). This theoretical insight motivates the use of a mixed-solver strategy that progressively adapts the solver order to the trajectory's changing stiffness.

\subsubsection{Mixture of Euler/Heun Solvers} 
While \cref{thm:second_curvature} highlights the varying stiffness across the diffusion trajectory, utilizing this insight requires a practical metric to allocate solver types efficiently. Computing the exact second derivative $\|\ddot{x}\|$ involves Hessian-vector products, which are computationally prohibitive to evaluate repeatedly. To address this, we introduce efficient discrete proxies based on the vector field variation, allowing us to construct an optimized solver schedule that adapts to the trajectory's geometry without incurring additional computational overhead during inference.

Given diffusion timesteps $\{t_i\}_{i=0}^{N}$ such that 
\begin{equation}
t_0 > t_1 > \cdots > t_N = 0,
\end{equation}
let $v_i := v(x_i,t_i)$ denote the vector field evaluated at the $(x_i, t_i)$. We define the \textit{absolute local curvature}\footnote{In this work, we use the term `curvature' to refer to the trajectory non-linearity (acceleration $\|\ddot{x}\|$) that dictates the truncation error of ODE solvers, following the convention in diffusion literature \citep{karras2022elucidating}.} as 
\begin{equation}
\kappa_{\text{abs}}(i) := \frac{\|v_{i+1}-v_i\|}{\Delta t_i} \approx \|\ddot{x}_i\|,
\end{equation}
where $\Delta t_i = t_i - t_{i+1}$.
To provide a scale-invariant measure of stiffness relative to the signal magnitude, we further introduce the \textit{relative local curvature},
\begin{equation}
\kappa_{\text{rel}}(i) := \frac{\kappa_{\text{abs}}(i)}{\|v_i\|}.
\end{equation}

Computing $\kappa_{\text{rel}}(i)$ directly requires evaluating the vector field at both $(x_i, t_i)$ and $(x_{i+1}, t_{i+1})$, which increases twice the number of function evaluations (NFE), although using low-order solvers such as Euler method. 

To avoid this overhead, we hence propose a \textit{cache-based curvature} that does not require additional network evaluations. Specifically, we define 
\begin{equation}
{\widehat{\kappa}}_{\text{rel}}(i) := \frac{\|v_i-v_{i-1}\|}{\Delta \hat{t}_i \|v_{i-1}\|}.
\end{equation}   
Since $v_{i-1}$ is computed at the previous timestep and $v_i$ is required for the current update, the curvature ${\widehat{\kappa}}_{\text{rel}}(i)$ can be obtained by simply reusing the cached evaluations, resulting in $\text{NFE}=1$ per step. A detailed analysis justifying that $\widehat{\kappa}_{\mathrm{rel}}(i)$ provides a
consistent approximation of the relative local curvature is given in
Appendix~\ref{sec:proxy_estimator}.

We validate the theoretical findings of \cref{thm:second_curvature} across standard benchmarks by visualizing the log correlation between ${\widehat{\kappa}}_{\text{rel}}(i)$ and noise levels, as shown in \cref{fig:krel_cifar10_ffhq_afhqv2}.

\begin{figure}[tb]
  \centering
  \includegraphics[width=0.92\linewidth]{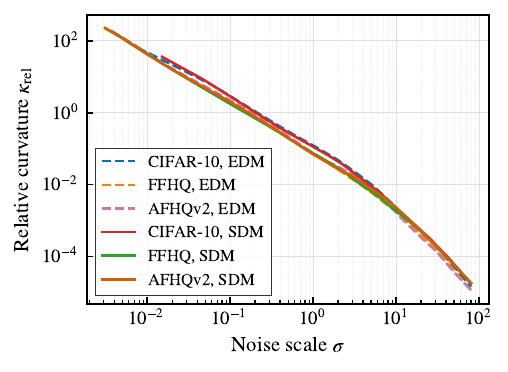}
  \caption{Relative curvature $\hat{\kappa}_{rel}$ as a function of noise level $\sigma$
    for standard benchmarks.
    The curvature exhibits an approximately linear correlation with noise levels $\sigma$ in log scale, consistent with the theoretical derivation of second order probability flow ODE.}
  \label{fig:krel_cifar10_ffhq_afhqv2}
\end{figure}

Motivated by this curvature profile, we introduce a dynamic scheduling function $\Lambda(t) \in [0, 1]$ that controls the convex combination of a first-order Euler output $x^E(t)$ and a second-order Heun output $x^H(t)$:
\begin{equation}
x(t) = \Lambda(t) x^{E}(t) + \left(1-\Lambda(t)\right) x^{H}(t).
\end{equation}
We consider step, linear, and cosine \cite{nichol2021improved} scheduling functions for $\Lambda(t)$. For the step scheduler, we introduce threshold $\tau_k$ that determines when to switch from Euler to Heun updates based on the curvature measure.

\subsection{Adaptive Timestep Schedulers}
\label{subsec:adaptive_scheduling}
\subsubsection{Adaptive step size based on Wasserstein distance} 
Given two probability distributions $\mu$ and $\nu$, the squared 2-Wasserstein distance is defined as:
\begin{equation}
W_2^2(\mu,\nu) := \inf_{\pi \in \Pi(\mu,\nu)} \mathbb{E}_{(X,Y) \sim \pi}[\|X-Y\|^2], 
\end{equation}
where $\Pi(\mu,\nu)$ denotes the set of all couplings between $\mu$ and $\nu$.

In diffusion-based generative models, each numerical integration step induces a transport between probability distributions at adjacent timesteps. Within this framework, the step size directly controls the local Wasserstein distance induced by the sampler.

Driven by this insight, we propose an \textit{adaptive step size scheduling algorithm} that explicitly bounds the local Wasserstein distance at each sampling step.
\begin{theorem}[Step Size with Local Wasserstein Bound]
\label{thm:stepsize_bound} 
Let $\Delta t$ denote the step size at time $t$. The maximum step size that does not exceed the given the upper bound of the Wasserstein distance, denoted as $\eta$, can be derived as 
\begin{equation}
\label{eq:adaptive_stepsize}
\Delta t \leq \sqrt{\frac{2\eta}{S_t}},
\end{equation}
where
\begin{equation}
\space S_t = \left(\mathbb{E}\left[\sup_{s\in[t,t+\Delta t]}\|\frac{d}{ds}v(x^*(s),s)\|^2\right]\right)^{1/2}.
\end{equation}
\end{theorem}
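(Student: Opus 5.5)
We compare one Euler step against the exact probability-flow solution and couple the two distributions through their shared starting point. Let $x^*(s)$ be the exact PF-ODE trajectory started at $x^*(t)=x_t$, where $x_t$ is distributed according to the marginal at time $t$. One Euler step of size $\Delta t$ gives $\hat x_{t+\Delta t} = x_t + \Delta t\, v(x_t,t)$. Running both maps from the same $x_t$ yields a coupling of the numerical and true next-step laws. By the definition of $W_2$ as an infimum over couplings,
\begin{equation*}
W_2\big(\mathrm{Law}(\hat x_{t+\Delta t}),\mathrm{Law}(x^*(t+\Delta t))\big) \le \Big(\mathbb{E}\,\|\hat x_{t+\Delta t}-x^*(t+\Delta t)\|^2\Big)^{1/2}.
\end{equation*}
It therefore suffices to control the pathwise local truncation error in $L^2$.

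The pathwise error comes from the integral form of the ODE:
\begin{equation*}
x^*(t+\Delta t)-\hat x_{t+\Delta t} = \int_t^{t+\Delta t}\big(v(x^*(s),s)-v(x_t,t)\big)\,ds = \int_t^{t+\Delta t}\int_t^{s}\frac{d}{dr}v(x^*(r),r)\,dr\,ds .
\end{equation*}
Here $\frac{d}{dr}v(x^*(r),r)=\ddot x^*(r)$ is exactly the acceleration whose closed form is given in \cref{thm:second_curvature}. Bounding the inner integrand by its supremum over $[t,t+\Delta t]$ gives
\begin{equation*}
\|x^*(t+\Delta t)-\hat x_{t+\Delta t}\| \le \frac{\Delta t^2}{2}\sup_{s\in[t,t+\Delta t]}\Big\|\frac{d}{ds}v(x^*(s),s)\Big\| .
\end{equation*}
Squaring, taking expectations, and taking square roots gives
\begin{equation*}
W_2 \le \frac{\Delta t^2}{2}\left(\mathbb{E}\sup_s\Big\|\frac{d}{ds}v\Big\|^2\right)^{1/2} = \frac{\Delta t^2}{2}S_t .
\end{equation*}

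Imposing $\frac{\Delta t^2}{2}S_t\le\eta$ and solving for $\Delta t$ gives $\Delta t\le\sqrt{2\eta/S_t}$, which is \eqref{eq:adaptive_stepsize}. The largest admissible step is the one attaining equality.

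The main obstacle is rigor rather than algebra, in two places. First, the supremum in $S_t$ runs over an interval that itself depends on $\Delta t$, so the bound is implicit. I would treat it as a fixed-point condition, or assume $S_t$ varies slowly so it can be evaluated at the candidate step; this matches how the schedule is used in practice. Second, the derivation needs enough regularity to exchange expectation and supremum and for $S_t<\infty$: the map $s\mapsto v(x^*(s),s)$ must be $C^1$ with square-integrable derivative. I would state this as an assumption, justified by the smoothness of $D_\theta$ (finite $J_D$ and $D_\sigma$ in \cref{thm:second_curvature}) for $\sigma$ bounded away from zero. Finally, the bound here is on the \emph{local} $W_2$ error assuming an exact start. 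Interpreting $\eta$ as the per-step error budget makes this consistent with the statement, with no stability propagation across steps required.
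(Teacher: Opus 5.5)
Your proposal is correct and follows the paper's proof essentially step for step: couple the exact and Euler steps through the shared starting point $x_t$, and bound the Euler local error pathwise by $\frac{\Delta t^2}{2}\sup_s\|\frac{d}{ds}v(x^*(s),s)\|$. Square, take expectations and square roots to get $W_2\le\frac{\Delta t^2}{2}S_t$, then solve $\frac{\Delta t^2}{2}S_t\le\eta$. Your remarks on the implicit dependence of $S_t$ on $\Delta t$ and on the needed regularity are caveats the paper leaves unstated, though no exchange of expectation and supremum is actually needed, since the supremum already sits inside the expectation.
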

A detailed proof is provided in Appendix \ref{appendix:stepsize_bound}.

The quantity $S_t$ can be interpreted as the local variation of the velocity field along the true trajectory $x^*(s)$ over $[t,t+\Delta t]$. Since the following term is intractable, we approximate it by computing the velocity field of the intermediate step $t+\Delta t_{\text{trial}} \in [t, t+\Delta t]$ along the sampling trajectory. 
\begin{equation}
\label{eq:approx_adaptive_stepsize}
\hat{S}_t := \|\frac{d}{dt}(v(x(t),t))\| = \frac{\|v_{t+\Delta t_{\text{trial}}} - v_t\|}{\Delta t_{\text{trial}}}.
\end{equation}
We denote the term as $\hat{S}_t$ throughout the paper.

\begin{theorem}[Total Wasserstein Distance Bound]
\label{thm:total_bound}
Let $\{t_i\}_{i=0}^{N}$ be discretized timesteps with step size $\Delta t_i = t_i - t_{i+1}$. Assume the velocity field $v(x,t)$ satisfies local Lipschiz conditions. Then, if we follow the adaptive step schedule defined in \cref{eq:adaptive_stepsize}, the total Wasserstein distance between the true distribution $p_{t_0}^*$ and its Euler approximation $p_{t_0}^{E}$ satisfies
\begin{equation}
W_2(p_{t_N}^*,p_{t_N}^E) \leq e^{Lt_0}\sum_{i=0}^{N-1} \frac{\Delta t_i^2}{2}\bar{M}_i,
\end{equation}
where 
\begin{equation}
\bar{M}_i := \sup_{s \in [t_i, t_{i+1}]} M_s, \quad M_t := \left\|\frac{d}{dt}v(x(t),t)\right\|,
\end{equation}
and $L$ denotes the Lipschitz constant of the Euler transport map.
\end{theorem}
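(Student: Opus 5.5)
We propose to argue by a standard synchronous-coupling argument combined with a discrete Gr\"onwall recursion. Fix the true flow map $\Phi_{t_i\to t_{i+1}}$ of the probability flow ODE and the Euler map $E_i(x) = x - \Delta t_i\, v(x,t_i)$ (with the sign convention that time decreases). Both processes start from the same prior $p_{t_0}$. We couple them synchronously: draw one $X_0=Y_0\sim p_{t_0}$, set $X_{i+1}=\Phi_{t_i\to t_{i+1}}(X_i)$ and $Y_{i+1}=E_i(Y_i)$. Then $W_2(p^*_{t_N},p^E_{t_N})\le (\mathbb{E}\|X_N-Y_N\|^2)^{1/2}=:e_N$, so it suffices to bound $e_N$.

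The recursion comes from the decomposition
\begin{equation*}
X_{i+1}-Y_{i+1} = \bigl(\Phi_{t_i\to t_{i+1}}(X_i)-E_i(X_i)\bigr) + \bigl(E_i(X_i)-E_i(Y_i)\bigr).
\end{equation*}
For the first term (the local truncation error), we write $\Phi(X_i)-E_i(X_i) = \int_{t_{i+1}}^{t_i}\bigl(v(x^*(s),s)-v(x^*(t_i),t_i)\bigr)\,ds$. Bounding the integrand by $|s-t_i|\sup_{s}\|\tfrac{d}{ds}v(x^*(s),s)\|\le |s-t_i|\,\bar M_i$ and integrating gives norm at most $\tfrac{\Delta t_i^2}{2}\bar M_i$. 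This is pathwise, so it also holds in $L^2$. This is exactly the computation behind \cref{thm:stepsize_bound}, and we may reuse it. For the second term, the local Lipschitz hypothesis on $v$ with constant $L_v$ gives $\|E_i(X_i)-E_i(Y_i)\|\le (1+L_v\Delta t_i)\|X_i-Y_i\|$. We interpret $L$ as this per-unit-time growth rate, consistent with ``Lipschitz constant of the Euler transport map''. Minkowski's inequality in $L^2$ then yields
\begin{equation*}
e_{i+1}\le (1+L\Delta t_i)\,e_i + \tfrac{\Delta t_i^2}{2}\bar M_i,\qquad e_0=0.
\end{equation*}

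Unrolling this recursion gives $e_N\le \sum_{i}\prod_{j>i}(1+L\Delta t_j)\,\tfrac{\Delta t_i^2}{2}\bar M_i$. Using $1+x\le e^x$ and $\sum_j\Delta t_j = t_0-t_N=t_0$, each product is at most $e^{Lt_0}$, which gives the claimed bound. The adaptive schedule \cref{eq:adaptive_stepsize} then lets each summand be controlled by $\eta_i$. This is not needed for the inequality itself, but it shows that the total error is at most $e^{Lt_0}\sum_i\eta_i$.

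The main obstacle is the interpretation of $\bar M_i$ and $L$. The theorem defines $M_t$ along ``$x(t)$'' without an expectation, while the truncation error needs the supremum along the true trajectory $x^*$, in an $L^2$ sense, as $S_t$ does. We would take $\bar M_i$ to mean the $L^2$-norm of the pathwise supremum along $x^*$, or assume a deterministic uniform bound. The local Lipschitz condition also has to be upgraded to a uniform one on the region visited by both chains, for instance by assuming compact support or bounded trajectories. We would state both of these as standing assumptions rather than try to derive them.
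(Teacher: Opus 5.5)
Your proposal is correct and is essentially the paper's proof. It uses the same split into a one-step truncation error taken from the true state (the paper's auxiliary measure $\tilde\nu_{i+1}=\Phi_i^E(\mu_i)$ is precisely the law of your $E_i(X_i)$) plus $(1+L\Delta t_i)$-Lipschitz propagation through the Euler map, unrolled via $1+x\le e^x$ and $\sum_j\Delta t_j=t_0$. The only difference is presentational: you carry one synchronous coupling and use Minkowski's inequality in $L^2$, while the paper applies the $W_2$ triangle inequality and the Lipschitz-pushforward bound at the level of measures; the caveats you flag (an $L^2$ or uniform reading of $\bar M_i$, a global bound $\sup_x\|J_x v\|\le L$ instead of a merely local Lipschitz condition) are exactly what the paper's proof tacitly assumes.
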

As stated in \cref{thm:total_bound}, we further show that taking the following step size consequently bounds the total Wasserstein distance by extending the analytical proof from AdaFlow \cite{hu2024adaflow} to score-based diffusion framework. The proof is provided in Appendix \ref{appendix:total_wasserstein}.

\textbf{Algorithm.} 
The practical realization of our Wasserstein-bounded scheduling, summarized in \cref{alg:adaptive_scheduling}, adopts a computationally efficient predictor-corrector strategy to strictly enforce the local error bound derived in \cref{thm:stepsize_bound}.
To mitigate the computational cost of exhaustive search, the \textproc{NextTimestep} routine initializes the candidate $\tilde{t}_{i+1}$ from a pre-defined reference grid, effectively warm-starting the search procedure with a value proximal to the optimal solution.
This candidate is then verified against the theoretical maximum step size derived in \cref{eq:adaptive_stepsize}.
If the candidate violates this bound or, conversely, proves overly conservative, the \textproc{LineSearch} routine refines the step size using an \textit{exponential backoff mechanism}.
By expanding or contracting the step size via multiplicative factors, this procedure ensures logarithmic convergence complexity $O(\log \frac{\Delta t}{\delta})$ akin to classical backtracking methods like the Armijo-Wolfe conditions \cite{armijo1966minimization, wolfe1969convergence, wolfe1971convergence}.

\begin{algorithm}[tb]
\caption{Adaptive Timestep Scheduling}
\label{alg:adaptive_scheduling}
\begin{algorithmic}
\STATE {\bfseries Input:} initial $(\mathbf{x}_0, t_0)$, model $\mathbf{v}_{\phi^\times}$, error tolerance map $\eta$
\STATE $i \leftarrow 0$
\STATE $\mathbf{v}_0 \leftarrow \mathbf{v}_{\phi^\times}(\mathbf{x}_0, t_0)$
\WHILE{$t_i > 0$}
    \STATE $\tilde{t}_{i+1} \leftarrow$ \textproc{NextTimestep}$(t_i)$
    \REPEAT
        \STATE $\tilde{\mathbf{x}}_{i+1} \leftarrow \mathbf{x}_i - (t_i - \tilde{t}_{i+1}) \mathbf{v}_i$
        \STATE $\tilde{\mathbf{v}}_{i+1} \leftarrow \mathbf{v}_{\phi^\times}(\tilde{\mathbf{x}}_{i+1}, \tilde{t}_{i+1})$
        \STATE $\hat{S}_{t_i} \leftarrow \frac{\| \tilde{\mathbf{v}}_{i+1} - \mathbf{v}_i \|}{\tilde{t}_{i+1} - t_i}$
        \COMMENT{\cref{eq:approx_adaptive_stepsize}}
        \STATE $\tilde{t}_{i+1} \leftarrow$ \textproc{LineSearch}$(\hat{S}_{t_i}, \tilde{t}_{i+1}, t_i)$
    \UNTIL{no further change in $\tilde{t}_{i+1}$}
    \STATE $\Delta t \leftarrow \sqrt{\frac{2 \eta(t_i)}{S_{t_i}}}$
    \COMMENT{Maximum step using \cref{thm:stepsize_bound}}
    \STATE $t_{i+1} \leftarrow t_i - \Delta t$
    \STATE $\mathbf{x}_{i+1} \leftarrow \mathbf{x}_i - \Delta t \mathbf{v}_i$
    \STATE $\mathbf{v}_{i+1} \leftarrow \mathbf{v}_{\phi^\times}(\mathbf{x}_{i+1}, t_{i+1})$
    \STATE $i \leftarrow i + 1$
\ENDWHILE
\STATE {\bfseries Output:} $\{t_i\}$
\end{algorithmic}
\end{algorithm}

\textbf{$\eta$-scheduling.} 
In practice, the difficulty of the sampling dynamics varies significantly across noise levels. To account for this, we allow the error tolerance term $\eta$ to depend on the current noise level $\sigma$ as the following:
\begin{equation}
\eta_{\text{target}} = (\eta_{\text{max}}-\eta_{\text{min}})\left(\frac{\sigma}{\sigma_{\text{max}}}\right)^p + \eta_{\text{min}},
\end{equation}
where $p \geq 0$ controls how strongly the error budget is concentrated throughout different noise levels. The following design assigns a larger tolerance during the early, high-noise stages, while the error bound becomes more restrictive as $\sigma \rightarrow 0$.

\subsubsection{$N$-step Resampling}
The proposed scheduling method provides a sequence of diffusion timesteps where the local Wasserstein distance is below a certain threshold value. However, the number of produced steps cannot be controlled. Hence, we propose an additional algorithm, where we resample given the number of steps we would like to take.

Given a sequence of diffusion timesteps $\{t_i\}_{i=0}^{N}$ and corresponding noise levels $\{\sigma_i\}_{i=0}^{N}$, defined by scheduler $\sigma$ such that $t_i = \sigma^{-1}(\sigma_i)$, we define an incremental cost function $L(t_i,t_{i+1})$ and define the total cost and geodesic path length as 
\begin{equation}
L = \sum_{i=0}^{N-1} L(t_i,t_{i+1}), \quad \Gamma = \sum_{i=0}^{N-1} \sqrt{L(t_i,t_{i+1})}.
\end{equation}
As shown in prior work \cite{williams2024score}, the schedule that minimizes the corresponding energy function traverses the diffusion path at constant geodesic speed, i.e., 
\begin{equation}
\sqrt{L(t_i^*,t_{i+1}^*)} \approx \Gamma/N, \quad i=0,...,N-1,
\end{equation}
which implies that optimal timesteps can be obatined by uniformly discretizing the cumulative geodesic length $\Gamma$, which motivates our resampling procedure.

We show that the same conclusion holds and expand the following resampling procedure under a weighted incremental cost setting, as proven in Proposition \ref{prop:weighted_sum}. 
\begin{equation}
\label{eq:weighted_cost}
\tilde{L}(t_i,t_{i+1}) := w(t_i)L(t_i,t_{i+1}), \quad w(t) \geq 0.
\end{equation}
Hence, we define our \textit{weighted} incremental cost function using the empirically measured local error proxy $\eta_{i}$ as
\begin{equation}
\tilde{L}(t_i, t_{i+1}) := w(t_i)\eta_{i},
\end{equation}
and correspondingly define the cumulative (weighted) geodesic length as 
\begin{equation}
\tilde{\Gamma}(t_i) := \sum_{j=0}^{i-1} \sqrt{\tilde{L}(t_j, t_{j+1})} = \sum_{j=0}^{i-1} \sqrt{w(t_j)}\sqrt{\eta_{j}}.
\end{equation}
We then obtain an $N$-step schedule by uniformly discretizing $\tilde{\Gamma}$ by selecting timesteps $\{t_i^*\}_{i=0}^{N}$ such that $\tilde{\Gamma}(t_i^*)$ is approximately linear in $i$, which we denote the following procedure as the \textit{$N$-step resampling algorithm}.

In our implementation, we define the weighted function in terms of the noise level $\sigma$ as
\begin{equation}
g(\sigma) := \left(\frac{\sigma}{\sigma_{max}}\right)^{-q}, \quad w(t_i) = g(\sigma_i)^2,
\end{equation}
where parameter $q \geq 0$ controls the degree of emphasis on the later steps. Larger values of $q$ allocate a greater fraction of the sampling budget to the low-$\sigma$ regime.

\section{Experiments}
\label{sec:experiments}
\subsection{Implementation Details}
\textbf{Datasets, Models, and Metrics.}
We evaluate our proposed SDM sampler on CIFAR-10 at $32 \times 32$ resolution, and FFHQ, AFHQv2, and ImageNet $64 \times 64$ resolution, following the baseline \cite{karras2022elucidating}. We use pretrained models from EDM \cite{karras2022elucidating} with no additional training. We use FID and NFE to measure sample quality and samping efficency.

\textbf{Baselines}
We follow the baseline timestep discretization from EDM schedule, where the noise levels $\{\sigma_i\}_{i=0}^{N}$ are defined as 
\begin{equation}
\sigma_{i < N} = \left({\sigma_{\text{max}}}^{\frac{1}{\rho}}+\frac{i}{N-1}({\sigma_{\text{min}}}^{\frac{1}{\rho}}-{\sigma_{\text{max}}}^{\frac{1}{\rho}})\right)^{\rho}, \space  \sigma_{N} = 0.
\end{equation}
with $\rho = 7$. We use a fixed number of timesteps for each dataset: 18 steps for CIFAR-10, 40 steps for FFHQ and AFHQv2, and 256 steps for ImageNet. We adopt stochastic settings with $S_{\text{churn}} = 40, S_{\text{min}} = 0.05, S_{\text{max}} = 50$, and $S_{\text{noise}} = 1.003$ for ImageNet. 

For baseline comparisions, we evaluate both EDM and Corrector Optimized Schedules \cite{williams2024score}, which we abbreviate as COS throughout the paper. COS schedules are computed with batch size of 128.

\textbf{ODE Solvers and Timestep Scheduling.}
For adaptive solvers, we experiment with step, linear, and cosine \cite{nichol2021improved} schedules for our choice of $\Lambda(t)$. We apply adaptive scheduling to both Euler and Heun solvers. When using adaptive scheduling, we remove the stochastic settings from EDM, and use deterministic configurations for ImageNet. 

\textbf{Hyperparameters.}
We set the threshold term $\tau_k$ for step scheduler, Wasserstein error tolerance parameters $\eta_{\text{min}}, \eta_{\text{max}}, p$, and $N$-step resampling parameter $q$ as our hyperparameters.

\subsection{Results}
\textbf{Unconditional Generation.}
Quantitative results for unconditional generation are reported in
\cref{table:results_unconditional}.
Across all datasets, solvers, and pre-trained model configurations, SDM consistently improves sample quality
over prior baselines, demonstrating robust performance under both VP and VE
parameterizations.

Under the Euler solver, SDM with adaptive scheduling yields substantial gains,
achieving FID 6.18 on CIFAR-10, 4.16 on FFHQ, and 2.71 on AFHQv2 for VP models,
compared to 7.61, 4.59, and 2.93 obtained by the EDM baseline, respectively.
Similar improvements are observed for VE models.
With the Heun solver, SDM remains competitive or superior to existing methods,
e.g., achieving FID 2.41 on FFHQ and 2.13 on AFHQv2.

Applying the adaptive solver alone surpasses EDM baseline with Heun solvers,
achieving FID 1.93 on CIFAR-10, 2.58 on FFHQ, and especially reducing the number of function evaluations (NFE) by approximately 15--20\% while achieving FID 1.98 on AFHQv2. Increasing the threshold $\tau_k$ can further reduce NFE, enabling a flexible trade-off between sample quality and computational efficiency. Most notably, combining both adaptive solver with adaptive scheduling yields
the best overall performance, reaching FID 1.97 for VP and 1.99 for VE configuration on AFHQv2.

\textbf{Conditional Generation.} 
Results for conditional generation are shown in
\cref{table:results_conditional}.
Under the Euler solver, SDM with adaptive scheduling achieves strong gains, obtaining state-of-the-art FID 6.10 and 5.12 on CIFAR-10 with VP and VE based models, respectively, and 2.32 on ImageNet.

For adaptive solver settings, SDM further improves both sample quality and sampling efficiency, achieving FID 1.81 and 1.77 on CIFAR-10 with VP and VE based models, respectively, and 1.45 on ImageNet. Adding the adaptive solver on top of adaptive scheduling yields further performance improvements compared to adaptive scheduling alone, indicating that the two components are complementary and can be effectively combined.

Further qualitative results are shown in Appendix \ref{sec:additional_results}.

\begin{table*}[t]
\caption{Quantitative results on unconditional generation for CIFAR-10 $32\times32$, FFHQ $64\times64$, and AFHQv2 $64\times64$, measured by FID and NFE. Results for the adaptive solver are based on the step scheduling function
$\Lambda(t)$ with the optimal threshold parameter $\tau_k$.
Adaptive scheduling is applied to Euler, Heun, and SDM-based adaptive solvers. The best-performing configurations compared to the corresponding baselines are
highlighted in \textbf{bold}, while the overall best results across all methods are highlighted in \underline{\textbf{bold}}. Overall, the proposed SDM sampler achieves superior performance in both sample quality and computational efficiency over all baselines.}
\label{table:results_unconditional}
\centering
\renewcommand{\arraystretch}{1.15}
\setlength{\tabcolsep}{4pt}

\resizebox{\textwidth}{!}{
\begin{tabular}{
    p{0.13\textwidth} @{\hspace{10pt}} l p{0.06\textwidth}
    *{6}{>{\centering\arraybackslash}p{0.07\textwidth}}
    }
    \toprule
        \multicolumn{3}{c}{} & 
        \multicolumn{2}{c}{Unconditional} &
        \multicolumn{2}{c}{Unconditional} &
        \multicolumn{2}{c}{Unconditional} \\      
        \multicolumn{3}{c}{} & 
        \multicolumn{2}{c}{CIFAR-10 $32 \times 32$} &
        \multicolumn{2}{c}{FFHQ $64 \times 64$} &
        \multicolumn{2}{c}{AFHQv2 $64 \times 64$} \\
        \cmidrule(lr){4-5}\cmidrule(lr){6-7}\cmidrule(lr){8-9}
        \multicolumn{3}{c}{} & 
        VP & VE & VP & VE & VP & VE \\
        \midrule
        \textbf{Solver} & \textbf{Schedule} & & & & & & & \\
        \midrule
        \multirow{4}{*}{Euler} & EDM $(\rho=7)$ & & 7.61 & 7.75 & 4.59 & 4.76 & 2.93 & 3.21 \\
        & COS \cite{williams2024score} & & 7.43 & 7.25 & 4.29 & \textbf{4.46} & 2.89 & 3.10 \\
        & SDM (Adaptive Scheduling) & & \textbf{6.18} & \textbf{6.48} & \textbf{4.16} & 4.51 & \textbf{2.71} & \textbf{2.97} \\
        \cmidrule{2-9}
        & & NFE & 17 & 17 & 39 & 39 & 39 & 39 \\
        \midrule
        \addlinespace[2pt]
        
        \multirow{4}{*}{Heun} & EDM $(\rho=7)$ & & \textbf{1.96} & 1.97 & 2.47 & 2.59 & 2.04 & 2.17 \\
        & COS \cite{williams2024score} & & 2.08 & 2.10 & 2.48 & 2.61 & \textbf{2.04} & 2.16 \\
        & SDM (Adaptive Scheduling) & & 2.00 & \underline{\textbf{1.95}} & \underline{\textbf{2.41}} & \underline{\textbf{2.52}} & 2.08 & \textbf{2.13} \\
        \cmidrule{2-9}
        & & NFE & 35 & 35 & 79 & 79 & 79 & 79 \\
        \midrule
        \addlinespace[2pt]
        
        \multirow{5}{*}{SDM} & EDM $(\rho=7)$ & & \underline{\textbf{1.93}} & 1.97 & 2.48 & \textbf{2.58} & \textbf{1.98} & \textbf{2.10} \\
        \cmidrule{2-9}
        & & NFE & 31 & 32 & 72 & 76 & 66 & 67 \\
        \cmidrule{2-9}
        & SDM (Adaptive Scheduling) & & \underline{\textbf{1.93}} & 1.97 & 2.47 & \textbf{2.58} & \underline{\textbf{1.97}} & \underline{\textbf{1.99}} \\
        \cmidrule{2-9}
        & & NFE & 32 & 33 & 76 & 77 & 75 & 72 \\
    \bottomrule
\end{tabular}
}
\end{table*}

\begin{figure}[t]
  \centering
  \includegraphics[width=\linewidth]{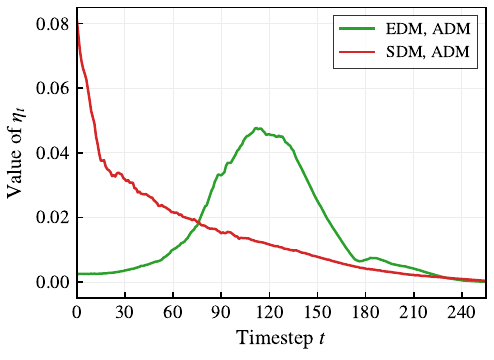}
  \caption{Distribution of local Wasserestein error bound $\eta_t$ over diffusion timesteps for ImageNet $64\times64$. EDM schedules exhibit an initially increasing trend with a subsequent decay, reaching maximum during the intermediate sampling stages. In contrast, SDM schedules allocate a larger portion of the error budget to the early high-noise stages, resulting in improved sample quality.}
  \label{fig:eta_imagenet}
\end{figure}

\subsection{Ablation Studies}
We conduct ablation studies on the following key components of the proposed SDM sampler.

\textbf{Threshold $\tau_k$.} Under the step scheduling function $\Lambda(t)$, we investigate the effect of the threshold parameter $\tau_k$, which controls the activation of higher order solvers. We evaluate FID while varying $\tau_k$ for each dataset, as shown in \cref{fig:ablation_tauk_afhqv2_cifar10}. Based on the results, we set $\tau_k = 2 \times 10^{-4}$ for CIFAR-10, $\tau_k = 1 \times 10^{-4}$ for FFHQ and ImageNet, and $\tau_k = 1 \times 10^{-3}$ for AFHQv2 under EDM schedules. For SDM schedules, we further adjust the threshold on AFHQv2, using $\tau_k = 2 \times 10^{-4}$ for VP parameterization. Note that as long as $\tau_k > 0$, the number of function evaluation is less than 2 per diffusion timestep, $i.e., \text{NFE} < 2$.

\begin{figure}[tb]
  \centering
  \includegraphics[width=\linewidth]{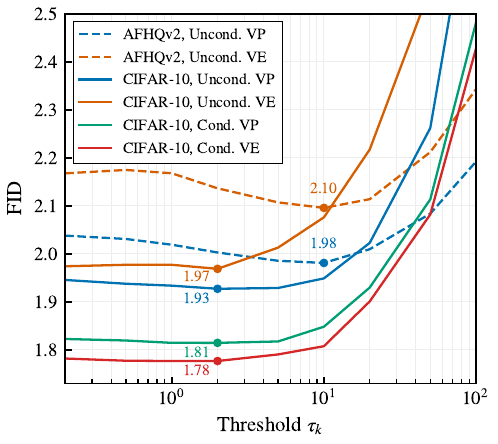}
  \caption{Ablation on threshold $\tau_k$. FID as a function of the curvature threshold $\tau_k$ for CIFAR-10 ($32\times32$, solid)
and AFHQv2 ($64\times64$, dashed), evaluated under unconditional and conditional
settings using the step-scheduler-based adaptive solver. Markers denote the selected $\tau_k$ values that yield the best model performance for each dataset and training configuration.}
  \label{fig:ablation_tauk_afhqv2_cifar10}
\end{figure}

\textbf{Choice of $\Lambda(t)$.} We compare step, linear, and cosine schedules for the scheduler function $\Lambda(t)$. For step schedules, FID values are reported based on the optimized $\tau_k$ value selected above. Quantitative results are summarized in \cref{table:ablation_lambda}. Across all datasets and model configurations, step schedule consistently achieves the best FID performance. In addition, step schedules require fewer function evaluations per timestep with $\text{NFE} < 2$, whereas linear and cosine schedules require $\text{NFE} = 2$ per timestep.

\textbf{Error tolerance and $N$-step resampling parameters.}
For adaptive scheduling, we perform a grid search over the error tolerance and resampling parameters on CIFAR-10, as detailed in \cref{tab:hyperparam_grid}. For the remaining datasets, we observe that performance is largely insensitive to the parameters. Consequently, we set parameters $\eta_{\text{min}} = 0.02, \eta_{\text{max}} = 0.20$, $p = 1.0$, and $q = 0.25$ for FFHQ and AFHQv2, and $\eta_{\text{min}} = 0.001, \eta_{\text{max}} = 0.01$, $p = 1.0$, and $q = 0.25$ for ImageNet.

\subsection{Analysis of $\eta_t$} 
We analyze the evolution of per-step error indicator $\eta_t$ for both SDM schedules and EDM schedules. As shown in \cref{fig:eta_imagenet}, for EDM schedules, $\eta_t$ initially increases during the early sampling stages and decreases afterwards, indicating that there exists unnecessary error budget expended during the intermediate stages, leaving room for additional improvement. In contrast, SDM schedules exhibit a monotonically decreasing trend in $\eta_t$ throughout the sampling trajectory, suggesting that SDM schedules efficiently allocates more of the allowable error budget to the early stages of sampling, where dynamics are smoother, while enforcing tighter error control in later stage when the sampling trajectory becomes more nonlinear. As a result, SDM schedules achieve improved sample quality by devoting higher precision to the more critical stages of generation.  

\section{Conclusion}
In this work, we introduce a sampling design space that improves both sample quality and sampling efficiency for pretrained diffusion-based generative models. By formulating the second-order behavior of the optimal sampling trajectory from the probability flow ODE, we observe that the trajectory curvature remains negligible during the early sampling stage, which motivates the use of first-order ODE solvers in this regime, while reserving higher-order integration for later stages where the dynamics become more nonlinear. Based on this observation, we propose an adaptive solver that dynamically selects numerical solvers according to the local behavior of the sampling trajectory. In parallel, we introduce an adaptive scheduling method that optimizes diffusion timesteps based on a criterion derived from a Wasserstein distance upper bound. Across extensive experiments, we demonstrate that each component consistently improves sampling quality and computational cost. We believe that our proposed method provides a strong and flexible baseline for future research on efficient diffusion sampling, and that the underlying principles are broadly applicable to other diffusion-based generative models and domains. 

\section*{Impact Statement}
This paper presents work whose goal is to advance the field of Machine
Learning. There are many potential societal consequences of our work, none
which we feel must be specifically highlighted here.

\bibliography{citation}

@article{song2020score,
  title={Score-based generative modeling through stochastic differential equations},
  author={Song, Yang and Sohl-Dickstein, Jascha and Kingma, Diederik P and Kumar, Abhishek and Ermon, Stefano and Poole, Ben},
  journal={arXiv preprint arXiv:2011.13456},
  year={2020}
}

@article{ho2020denoising,
  title={Denoising diffusion probabilistic models},
  author={Ho, Jonathan and Jain, Ajay and Abbeel, Pieter},
  journal={Advances in neural information processing systems},
  volume={33},
  pages={6840--6851},
  year={2020}
}

@inproceedings{nichol2021improved,
  title={Improved denoising diffusion probabilistic models},
  author={Nichol, Alexander Quinn and Dhariwal, Prafulla},
  booktitle={International conference on machine learning},
  pages={8162--8171},
  year={2021},
  organization={PMLR}
}

@article{song2023consistency,
  title={Consistency Models},
  author={Song, Yang and Dhariwal, Prafulla and Chen, Mark and Sutskever, Ilya},
  journal={arXiv preprint arXiv:2303.01469},
  year={2023}
}

@article{luo2023latent,
  title={Latent consistency models: Synthesizing high-resolution images with few-step inference},
  author={Luo, Simian and Tan, Yiqin and Huang, Longbo and Li, Jian and Zhao, Hang},
  journal={arXiv preprint arXiv:2310.04378},
  year={2023}
}

@article{wang2024phased,
  title={Phased consistency models},
  author={Wang, Fu-Yun and Huang, Zhaoyang and Bergman, Alexander and Shen, Dazhong and Gao, Peng and Lingelbach, Michael and Sun, Keqiang and Bian, Weikang and Song, Guanglu and Liu, Yu and others},
  journal={Advances in neural information processing systems},
  volume={37},
  pages={83951--84009},
  year={2024}
}

@article{kim2023consistency,
  title={Consistency trajectory models: Learning probability flow ode trajectory of diffusion},
  author={Kim, Dongjun and Lai, Chieh-Hsin and Liao, Wei-Hsiang and Murata, Naoki and Takida, Yuhta and Uesaka, Toshimitsu and He, Yutong and Mitsufuji, Yuki and Ermon, Stefano},
  journal={arXiv preprint arXiv:2310.02279},
  year={2023}
}

@article{yang2024consistency,
  title={Consistency flow matching: Defining straight flows with velocity consistency},
  author={Yang, Ling and Zhang, Zixiang and Zhang, Zhilong and Liu, Xingchao and Xu, Minkai and Zhang, Wentao and Meng, Chenlin and Ermon, Stefano and Cui, Bin},
  journal={arXiv preprint arXiv:2407.02398},
  year={2024}
}

@inproceedings{meng2023distillation,
  title={On distillation of guided diffusion models},
  author={Meng, Chenlin and Rombach, Robin and Gao, Ruiqi and Kingma, Diederik and Ermon, Stefano and Ho, Jonathan and Salimans, Tim},
  booktitle={Proceedings of the IEEE/CVF conference on computer vision and pattern recognition},
  pages={14297--14306},
  year={2023}
}

@inproceedings{rombach2022high,
  title={High-resolution image synthesis with latent diffusion models},
  author={Rombach, Robin and Blattmann, Andreas and Lorenz, Dominik and Esser, Patrick and Ommer, Bj{\"o}rn},
  booktitle={Proceedings of the IEEE/CVF conference on computer vision and pattern recognition},
  pages={10684--10695},
  year={2022}
}

@article{ho2022video,
  title={Video diffusion models},
  author={Ho, Jonathan and Salimans, Tim and Gritsenko, Alexey and Chan, William and Norouzi, Mohammad and Fleet, David J},
  journal={Advances in neural information processing systems},
  volume={35},
  pages={8633--8646},
  year={2022}
}

@article{singer2022make,
  title={Make-a-video: Text-to-video generation without text-video data},
  author={Singer, Uriel and Polyak, Adam and Hayes, Thomas and Yin, Xi and An, Jie and Zhang, Songyang and Hu, Qiyuan and Yang, Harry and Ashual, Oron and Gafni, Oran and others},
  journal={arXiv preprint arXiv:2209.14792},
  year={2022}
}

@article{poole2022dreamfusion,
  title={Dreamfusion: Text-to-3d using 2d diffusion},
  author={Poole, Ben and Jain, Ajay and Barron, Jonathan T and Mildenhall, Ben},
  journal={arXiv preprint arXiv:2209.14988},
  year={2022}
}

@inproceedings{lin2023magic3d,
  title={Magic3d: High-resolution text-to-3d content creation},
  author={Lin, Chen-Hsuan and Gao, Jun and Tang, Luming and Takikawa, Towaki and Zeng, Xiaohui and Huang, Xun and Kreis, Karsten and Fidler, Sanja and Liu, Ming-Yu and Lin, Tsung-Yi},
  booktitle={Proceedings of the IEEE/CVF conference on computer vision and pattern recognition},
  pages={300--309},
  year={2023}
}

@article{lu2025dpm,
  title={Dpm-solver++: Fast solver for guided sampling of diffusion probabilistic models},
  author={Lu, Cheng and Zhou, Yuhao and Bao, Fan and Chen, Jianfei and Li, Chongxuan and Zhu, Jun},
  journal={Machine Intelligence Research},
  pages={1--22},
  year={2025},
  publisher={Springer}
}

@article{zhao2023unipc,
  title={Unipc: A unified predictor-corrector framework for fast sampling of diffusion models},
  author={Zhao, Wenliang and Bai, Lujia and Rao, Yongming and Zhou, Jie and Lu, Jiwen},
  journal={Advances in Neural Information Processing Systems},
  volume={36},
  pages={49842--49869},
  year={2023}
}

@article{zhang2022fast,
  title={Fast sampling of diffusion models with exponential integrator},
  author={Zhang, Qinsheng and Chen, Yongxin},
  journal={arXiv preprint arXiv:2204.13902},
  year={2022}
}

@article{salimans2022progressive,
  title={Progressive distillation for fast sampling of diffusion models},
  author={Salimans, Tim and Ho, Jonathan},
  journal={arXiv preprint arXiv:2202.00512},
  year={2022}
}

@article{li2025back,
  title={Back to basics: Let denoising generative models denoise},
  author={Li, Tianhong and He, Kaiming},
  journal={arXiv preprint arXiv:2511.13720},
  year={2025}
}

@article{albergo2022building,
  title={Building normalizing flows with stochastic interpolants},
  author={Albergo, Michael S and Vanden-Eijnden, Eric},
  journal={arXiv preprint arXiv:2209.15571},
  year={2022}
}

@article{lipman2022flow,
  title={Flow matching for generative modeling},
  author={Lipman, Yaron and Chen, Ricky TQ and Ben-Hamu, Heli and Nickel, Maximilian and Le, Matt},
  journal={arXiv preprint arXiv:2210.02747},
  year={2022}
}

@article{liu2022flow,
  title={Flow straight and fast: Learning to generate and transfer data with rectified flow},
  author={Liu, Xingchao and Gong, Chengyue and Liu, Qiang},
  journal={arXiv preprint arXiv:2209.03003},
  year={2022}
}

@inproceedings{esser2024scaling,
  title={Scaling rectified flow transformers for high-resolution image synthesis},
  author={Esser, Patrick and Kulal, Sumith and Blattmann, Andreas and Entezari, Rahim and M{\"u}ller, Jonas and Saini, Harry and Levi, Yam and Lorenz, Dominik and Sauer, Axel and Boesel, Frederic and others},
  booktitle={Forty-first international conference on machine learning},
  year={2024}
}

@article{karras2022elucidating,
  title={Elucidating the design space of diffusion-based generative models},
  author={Karras, Tero and Aittala, Miika and Aila, Timo and Laine, Samuli},
  journal={Advances in neural information processing systems},
  volume={35},
  pages={26565--26577},
  year={2022}
}

@article{williams2024score,
  title={Score-optimal diffusion schedules},
  author={Williams, Christopher and Campbell, Andrew and Doucet, Arnaud and Syed, Saifuddin},
  journal={Advances in Neural Information Processing Systems},
  volume={37},
  pages={107960--107983},
  year={2024}
}

@inproceedings{xue2024accelerating,
  title={Accelerating diffusion sampling with optimized time steps},
  author={Xue, Shuchen and Liu, Zhaoqiang and Chen, Fei and Zhang, Shifeng and Hu, Tianyang and Xie, Enze and Li, Zhenguo},
  booktitle={Proceedings of the IEEE/CVF Conference on Computer Vision and Pattern Recognition},
  pages={8292--8301},
  year={2024}
}

@inproceedings{li2023autodiffusion,
  title={Autodiffusion: Training-free optimization of time steps and architectures for automated diffusion model acceleration},
  author={Li, Lijiang and Li, Huixia and Zheng, Xiawu and Wu, Jie and Xiao, Xuefeng and Wang, Rui and Zheng, Min and Pan, Xin and Chao, Fei and Ji, Rongrong},
  booktitle={Proceedings of the IEEE/CVF International Conference on Computer Vision},
  pages={7105--7114},
  year={2023}
}

@article{hu2024adaflow,
  title={Adaflow: Imitation learning with variance-adaptive flow-based policies},
  author={Hu, Xixi and Liu, Qiang and Liu, Xingchao and Liu, Bo},
  journal={Advances in Neural Information Processing Systems},
  volume={37},
  pages={138836--138858},
  year={2024}
}

@article{sabour2024align,
  title={Align your steps: Optimizing sampling schedules in diffusion models},
  author={Sabour, Amirmojtaba and Fidler, Sanja and Kreis, Karsten},
  journal={arXiv preprint arXiv:2404.14507},
  year={2024}
}

@article{park2024textit,
  title={Jump Your Steps: Optimizing Sampling Schedule of Discrete Diffusion Models},
  author={Park, Yong-Hyun and Lai, Chieh-Hsin and Hayakawa, Satoshi and Takida, Yuhta and Mitsufuji, Yuki},
  journal={arXiv preprint arXiv:2410.07761},
  year={2024}
}

@article{10.5555/3722577.3722688,
author = {Kr\"{a}mer, Nicholas and Hennig, Philipp},
title = {Stable implementation of probabilistic ODE solvers},
year = {2024},
issue_date = {January 2024},
publisher = {JMLR.org},
volume = {25},
number = {1},
issn = {1532-4435},
journal = {J. Mach. Learn. Res.},
month = jan,
articleno = {111},
numpages = {29},
}

@article{armijo1966minimization,
  title   = {Minimization of functions having {L}ipschitz continuous first partial derivatives},
  author  = {Armijo, Larry},
  journal = {Pacific Journal of Mathematics},
  volume  = {16},
  number  = {1},
  pages   = {1--3},
  year    = {1966}
}

@article{wolfe1969convergence,
  title={Convergence conditions for ascent methods},
  author={Wolfe, Philip},
  journal={SIAM Review},
  volume={11},
  number={2},
  pages={226--235},
  year={1969},
  publisher={SIAM}
}

@article{wolfe1971convergence,
  title={Convergence conditions for ascent methods. II: Some corrections},
  author={Wolfe, Philip},
  journal={SIAM Review},
  volume={13},
  number={2},
  pages={185--188},
  year={1971},
  publisher={SIAM}
}
\bibliographystyle{icml2026}

\newpage
\appendix
\onecolumn

\section{Derivation of ODE curvature}
\label{appendix:ode_derivation}
\textbf{General PF-ODE second-order derivative.}
We consider the probability flow ODE (PF-ODE) formulation of diffusion models.
Following \cite{song2020score}, the PF-ODE can be written as
\begin{equation}
dx
=
\left[
\frac{\dot{s}}{s}x
- s^2 \dot{\sigma}\sigma
\nabla_x \log p\!\left(x;\sigma\right)
\right] dt .
\label{eq:pfo_ode_general}
\end{equation}
Using the $x$-prediction formulation, the score function is expressed as
\begin{equation}
\nabla_x \log p(x;\sigma)
=
\frac{D_\theta(x;\sigma)-x}{\sigma^2},
\end{equation}
which yields the equivalent ODE
\begin{equation}
\dot{x}
=
\frac{\dot{s}}{s}x
+
\frac{\dot{\sigma}}{\sigma}
\Bigl(x - s D_\theta(x;\sigma)\Bigr).
\label{eq:pfo_ode_xpred}
\end{equation}

Defining
\begin{equation}
\label{eq:def_epsilon}
A = \frac{\dot{s}}{s}, \quad \epsilon_{\theta} := \frac{x - sD_{\theta}(x;\sigma)}{\sigma},
\end{equation}
we obtain the following
\begin{equation}
\label{eq:firstorder_eq}
\dot{x} = Ax + \dot{\sigma}\epsilon_{\theta}.
\end{equation}

\textbf{Derivation of $\dot{\epsilon}_{\theta}$.}
Recall from \cref{eq:def_epsilon} that
\begin{equation}
\epsilon_\theta = \frac{x-sD_\theta(x,\sigma)}{\sigma}, 
\qquad
N := x-sD_\theta(x,\sigma),
\end{equation}
so that $\epsilon_\theta = N/\sigma$. Differentiating yields
\begin{equation}
\dot{\epsilon}_{\theta}
= \frac{\dot{N}\sigma - N\dot{\sigma}}{\sigma^2}
= \frac{\dot N}{\sigma} - \frac{\dot\sigma}{\sigma}\epsilon_\theta .
\label{eq:eps_dot_start}
\end{equation}
Next, differentiating $N=x-sD_\theta$ gives
\begin{align}
\dot{N}
&= \dot{x} - \dot{s}D_\theta - s\dot{D}_\theta \nonumber\\
&= \dot{x} - \dot{s}D_\theta - s\bigl(J_D\dot{x} + D_{\sigma}\dot{\sigma}\bigr) \nonumber\\
&= (I-sJ_D)\dot{x} - \dot{s}D_\theta - sD_{\sigma}\dot{\sigma},
\label{eq:N_dot}
\end{align}
where $J_D := \nabla_x D_\theta(x,\sigma)$ and $D_\sigma := \partial_\sigma D_\theta(x,\sigma)$.
Substituting \cref{eq:N_dot} into \cref{eq:eps_dot_start} yields
\begin{equation}
\dot{\epsilon}_{\theta}
= \frac{(I-sJ_D)\dot{x} - \dot{s}D_\theta - sD_{\sigma}\dot{\sigma}}{\sigma}
- \frac{\dot{\sigma}}{\sigma}\epsilon_{\theta}.
\label{eq:eps_dot_mid}
\end{equation}

Using $\dot{x} = \frac{\dot{s}}{s}x + \dot{\sigma}\epsilon_\theta$ and the identity
\begin{equation}
(I-sJ_D)x - sD_\theta
= x - sJ_Dx - sD_\theta
= \sigma\epsilon_\theta - s\sigma J_D\epsilon_\theta - s^2 J_D D_\theta,
\label{eq:identity_main}
\end{equation}
we expand \cref{eq:eps_dot_mid} as
\begin{align}
\dot{\epsilon}_{\theta}
&= \frac{\dot{s}}{s\sigma}\bigl[(I-sJ_D)x - sD_\theta\bigr]
+ \frac{\dot{\sigma}}{\sigma}(I-sJ_D)\epsilon_\theta
- \frac{\dot{\sigma}}{\sigma}\epsilon_\theta
- \frac{\dot{s}}{\sigma}D_\theta
- \frac{s\dot{\sigma}}{\sigma}D_\sigma \nonumber\\
&= \frac{\dot{s}}{s}\epsilon_\theta
-\Bigl(\dot{s}+\frac{\dot{\sigma}s}{\sigma}\Bigr)J_D\epsilon_\theta
-\frac{\dot{s}s}{\sigma}J_D D_\theta
-\frac{\dot{\sigma}s}{\sigma}D_\sigma .
\label{eq:eps_dot_final}
\end{align}

\textbf{Derivation of $\ddot{x}$.} 
Taking the time derivative yields the second-order ODE 
\begin{align}
\ddot{x} &= \dot{A}x + A\dot{x} + \ddot{\sigma}\epsilon_{\theta} + \dot{\sigma}\dot{\epsilon}_{\theta} \\ &= (\dot{A} + A^2)x + (A\dot{\sigma} + \ddot{\sigma})\epsilon_{\theta} + \dot{\sigma}\dot{\epsilon}_{\theta}.
\end{align}
where the first-order derivative term is replaced using \cref{eq:firstorder_eq}. 

Using $A=\dot{s}/s$, we have
\begin{equation}
\dot{A} + A^2 = \frac{\ddot{s}}{s}, \quad A\dot{\sigma} + \ddot{\sigma} = \ddot{\sigma} + \dot{\sigma}\frac{\dot{s}}{s}.
\end{equation}

Substituting all terms, we obtain the general curvature expression
\begin{equation}
\label{eq:general_curvature}
\ddot{x} = \frac{\ddot{s}}{s}x + (\ddot{\sigma} + 2\dot{\sigma}\frac{\dot{s}}{s})\epsilon_{\theta} - \dot{\sigma}(\dot{s} + \frac{\dot{\sigma}s}{\sigma})J_D\epsilon_{\theta} - \dot{\sigma}\frac{\dot{s}s}{\sigma}J_DD_{\theta} - \dot{\sigma}\frac{\dot{\sigma}s}{\sigma}D_{\sigma}.
\end{equation}

\subsection{EDM formulation}
For EDM formulation \cite{karras2022elucidating}, we set 
\begin{equation}
\sigma(t) = t, \quad s(t) = 1.
\end{equation}
Note that although EDM discretizes sampling timesteps based on $\rho$-schedule in the noise domain, the underlying probability flow ODE is parameterized directly by the noise level. Hence, 
\begin{equation}
\dot{\sigma} = 1, \quad \ddot{\sigma} = 0, \quad \dot{s} = \ddot{s} = 0.
\end{equation}
Substituting into \cref{eq:general_curvature}, the second derivative simplifies to 
\begin{equation}
\ddot{x} = - \frac{1}{\sigma}(J_D\epsilon_{\theta} + D_{\sigma}) = - \frac{1}{\sigma^2}J_D(x-D_{\theta}) - \frac{1}{\sigma} D_{\sigma}.
\end{equation}

\subsection{VP formulation}
For the VP formulation, we use the parameterization
\begin{equation}
\label{eq:vp_formulation}
\sigma(t)=\sqrt{e^{u(t)}-1},\quad
s(t)=\frac{1}{\sqrt{1+\sigma(t)^2}},
\quad
u(t)=\frac{1}{2}\beta_d t^2+\beta_{\min}t .
\end{equation}
For simplicity, we define
\begin{equation}
B(t):=\dot u(t)=\beta_{\min}+\beta_d t,
\quad
\dot B(t)=\beta_d .
\end{equation}
Since $1+\sigma(t)^2=e^{u(t)}$, substituting to \cref{eq:vp_formulation}, we have 
\begin{equation}
s(t)=\frac{1}{\sqrt{1+\sigma(t)^2}}=e^{-u(t)/2}.
\end{equation}

\textbf{Derivatives of $\sigma(t)$.}
By differentiating $\sigma(t)=\sqrt{e^{u(t)}-1}$, we obtain
\begin{equation}
\label{eq:first_order_sigma}
\dot\sigma(t)
=\frac{e^{u(t)}\dot u(t)}{2\sqrt{e^{u(t)}-1}}
=\frac{(1+\sigma(t)^2)B(t)}{2\sigma(t)}
=\frac12 B(t)\Big(\sigma(t)+\sigma(t)^{-1}\Big).
\end{equation}
Hence, we derive the second derivative of $\sigma(t)$ as 
\begin{align}
\ddot\sigma(t)
&=\frac12\dot B(t)(\sigma+\sigma^{-1})
+\frac12 B(t)\frac{d}{dt}(\sigma+\sigma^{-1}) \notag\\
&=\frac12\beta_d(\sigma+\sigma^{-1})
+\frac12 B(t)\dot\sigma(t)(1-\sigma^{-2}).
\end{align}
Substituting $\dot\sigma$ from (46) gives
\begin{equation}
\label{eq:second_order_sigma}
\ddot\sigma(t)
=\frac12\beta_d(\sigma+\sigma^{-1})
+\frac14 B(t)^2(\sigma-\sigma^{-3}).
\end{equation}

\textbf{Derivatives of $s(t)$.}
Using $s(t)=e^{-u(t)/2}$, we directly obtain
\begin{equation}
\label{eq:first_order_s}
\frac{\dot s(t)}{s(t)}=-\frac12\dot u(t)=-\frac12B(t),
\end{equation}
hence
\begin{equation}
\dot s(t)=-\frac12B(t)s(t).
\end{equation}
Taking the second derivative gives
\begin{align}
\ddot s(t)
&=\frac{d}{dt}\left(-\frac12B(t)s(t)\right)
=-\frac12\dot B(t)s(t)-\frac12B(t)\dot s(t)\notag\\
&=-\frac12\beta_d s(t)+\frac14B(t)^2 s(t),
\end{align}
In other words,
\begin{equation}
\frac{\ddot s(t)}{s(t)}=\frac14B(t)^2-\frac12\beta_d .
\end{equation}

\textbf{Substitution into \cref{eq:general_curvature}.}
Substituting \cref{eq:first_order_s} to our general PF-ODE second order expression, we obtain the following two terms as 
\begin{equation}
\ddot\sigma(t)+2\dot\sigma(t)\frac{\dot s(t)}{s(t)}
=\ddot\sigma(t)-B(t)\dot\sigma(t),
\end{equation}
\begin{equation}
\dot s(t)+\frac{\dot\sigma(t)s(t)}{\sigma(t)}
=s(t)\left(\frac{\dot s(t)}{s(t)}+\frac{\dot\sigma(t)}{\sigma(t)}\right)
=\frac12B(t)\frac{s(t)}{\sigma(t)^2}.
\end{equation}

As a result, under VP parameterization we get
\begin{equation}
\begin{aligned}
\ddot x
&=\left(\frac14B^2-\frac12\beta_d\right)x
+\left(\frac{\ddot\sigma}{\sigma}-B\frac{\dot\sigma}{\sigma}\right)(x-sD_\theta)\\
&\quad-\dot\sigma\left(\frac12\,B\frac{s}{\sigma^3}\right)J_D(x-sD_\theta)
-\dot\sigma\left[\frac{s^2}{\sigma}\left(\frac14B^2-\frac12\beta_d\right)\right]J_DD_{\theta}
-\dot\sigma\left(\frac{\dot\sigma s}{\sigma}\right)D_\sigma,
\end{aligned}
\end{equation}
where $B(t)=\beta_{\min}+\beta_d t$ and $\dot\sigma,\ddot\sigma$ are defined in \cref{eq:first_order_sigma}, \cref{eq:second_order_sigma}, respectively.

\subsection{VE formulation}
For VE formulation,
\begin{equation}
\sigma(t) = \sqrt{t}, \quad s(t) = 1,
\end{equation}
Hence,
\begin{equation}
\dot{\sigma} = \frac{1}{2\sqrt{t}} = \frac{1}{2\sigma}, \quad 
\ddot{\sigma} = \frac{d}{dt}(\frac{1}{2\sigma}) = \frac{d}{d\sigma}(\frac{1}{2\sigma})\dot{\sigma} = -\frac{1}{4\sigma^3}, \quad \dot{s} = \ddot{s} = 0.
\end{equation}
Substituting into \eqref{eq:general_curvature}, we obtain
\begin{equation}
\ddot{x} = - \frac{1}{4\sigma^3}(\epsilon_{\theta} + J_D\epsilon_{\theta} + D_{\sigma}) = - \frac{1}{4\sigma^4}(I+J_D)(x-D_{\theta}) - \frac{1}{4\sigma^3} D_{\sigma}.
\end{equation}

\section{Justification of the Proxy Curvature Estimator $\widehat{\kappa}_{\text{rel}}(i)$}
\label{sec:proxy_estimator}
Given the following definition of relative local curvature $\kappa_{\text{rel}}(i)$ and its corresponding estimator $\widehat{\kappa}_{\text{rel}}(i)$:
\begin{equation}
\kappa_{\text{rel}}(i) := \frac{\|v_{i+1}-v_i\|}{\Delta t_i \|v_i\|},
 \quad {\widehat{\kappa}}_{\text{rel}}(i) := \frac{\|v_i-v_{i-1}\|}{\Delta \hat{t}_i \|v_{i-1}\|},
\end{equation}
we would like to prove the following equation
\begin{equation}
\kappa_{\text{rel}}(i-1) = \widehat{\kappa}_{\text{rel}}(i).
\end{equation}
In other words, $\widehat{\kappa}_{\text{rel}}(i)$ is a one-step delayed evaluation of $\kappa_{\text{rel}}(i)$, hence being a valid estimator.

Specifically if we consider the case $S_{\text{churn}}=0$ following the notation in \cite{karras2022elucidating}, no intermediate perturbation is introduced, hence the sampler evaluates the drift term exactly at the grid points, i.e.,
\begin{equation}
\hat{x}_i = x_i, \quad \hat{t}_i = t_i.
\end{equation}
Note that $(\hat{x}_i,\hat{t}_i)$ denotes the actual evaluation point used by the EDM sampler
to compute the velocity $v_i$. Hence, the corresponding velocity field evaluations are numerically
identical:
\begin{equation}
v(\hat{x}_i,\hat{t}_i) = v(x_i,t_i) = v_i.
\end{equation}
Moreover, since 
\begin{equation}
\Delta \hat{t}_i = t_i - t_{i+1} =  \Delta t_i
\end{equation}
the effective step size also remains unchanged, hence the proof.

\section{Additional Proofs}
\subsection{Proof of \cref{thm:stepsize_bound}}
\label{appendix:stepsize_bound}
We provide a detailed proof of \cref{thm:stepsize_bound}, which establishes a sufficient condition on step size $\Delta t$ such that the Wasserstein distance between the true diffusion trajectory and its numerical approximation remains bounded.

Let $x(t)$ denote the solution of the diffusion ODE driven by the velocity field $v(\cdot,\cdot)$. For a given time $t$, the exact state at time $t+\Delta t$ can be written as
\begin{equation}
x(t+\Delta t) = x(t) + \int_{t}^{t+\Delta t}v(x(\tau),\tau) d\tau.
\end{equation}
Since the true velocity field along the true trajectory is generally intractable, we approximate the integral using a numerical solver. In particular, we consider the first-order Euler method:
\begin{equation}
x_{E}(t+\Delta t) = x(t) + \Delta t v(x(t),t).
\end{equation}
The resulting discretization error is therefore
\begin{equation}
\label{eq:discretization_error}
x(t+\Delta t) - x_{E}(t+\Delta t) = \int_{t}^{t+\Delta t}(v(x(\tau),\tau)-v(x(t),t))d\tau.
\end{equation}
Meanwhile, we define the auxiliary function
\begin{equation}
g(\tau) := v(x(\tau),\tau).    
\end{equation}
Then, by the fundamental theorem of calculus,
\begin{equation}
g(\tau) - g(t) = \int_{t}^{\tau}g'(s)ds,
\end{equation}
which implies the bound
\begin{equation}
\|g(\tau)-g(t)\| \leq (\tau-t) \sup_{s\in[t,t+\Delta t]}\|g'(s)\|.
\end{equation}
Substituting this bound into \cref{eq:discretization_error} yields 
\begin{equation}
\|x(t+\Delta t) - x_{E}(t+\Delta t)\| \\ \leq \int_{t}^{t+\Delta t}(\tau-t)d\tau \sup_{s\in[t,t+\Delta t]}\|\frac{d}{ds}(v(x(s),s)\| \\ = \frac{\Delta t^2}{2} \sup_{s\in[t,t+\Delta t]}\|\frac{d}{ds}v(x(s),s)\|.
\end{equation}

We now derive the discretization error in relation to the 2-Wasserstein distance. Recall the definition of Wasserstein distance between two probability measures $\mu$ and $\nu$:
\begin{equation}
W_2(\mu,\nu)^2 = \inf_{\pi \in \Pi(\mu,\nu)} \mathbb{E}_{(X,Y) \sim \pi}\left[\|X-Y\|^2\right].
\end{equation}
Let $x_t \sim p_t$ denote the random variable following the diffusion trajectory at time $t$. We then define the  coupled random variables 
\begin{equation}
x_{t+\Delta t}^* = \Phi_{t+\Delta t}^* (x_t), \quad x_{t+\Delta t}^{E} = \Phi_{t+\Delta t}^{E} (x_t).
\end{equation}
where $x_{t+\Delta t}^*, x_{t+\Delta t}^{E}$ corresponds to the exact and Euler-integrated flows, respectively.

Since the Wasserstein distance is upper bounded by the expected squared distance under any coupling, we obtain
\begin{equation}
\label{eq:wasserstein_bound}
 W_2(p_{t+\Delta t}^*,p_{t+\Delta t}^{E}) \leq \left(\mathbb{E}\left[\|x_{t+\Delta t}^* - x_{t+\Delta t}^E\|^2\right]\right)^{1/2} \\ \leq \frac{\Delta t^2}{2} \left(\mathbb{E}\left[\sup_{s\in[t,t+\Delta t]}\|\frac{d}{ds}v(x^*(s),s)\|^2\right]\right)^{1/2}.
\end{equation}
Therefore, requiring the Wasserstein distance to be bounded by a specified tolerance $\eta$,
\begin{equation}
W_2(p_{t+\Delta t}^*,p_{t+\Delta t}^{E}) \le \eta,
\end{equation}
leads to the following sufficient condition on the step size:
\begin{equation}
\Delta t \leq \sqrt{\frac{2\eta}{S_t}},
\end{equation}
where
\begin{equation}
S_t = \left(\mathbb{E}\left[\sup_{s\in[t,t+\Delta t]}\|\frac{d}{ds}v(x^*(s),s)\|^2\right]\right)^{1/2}.
\end{equation}
Hence, the proof.

\subsection{Proof of \cref{thm:total_bound}}
\label{appendix:total_wasserstein}
Here, we provide a complete proof of the total Wasserstein distance bound stated in \cref{thm:total_bound}.

Let $x(t)$ denote the solution of the probability flow ODE 
\begin{equation}
\dot{x}(t) = v(x(t),t).
\end{equation}
By the chain rule, the derivative of the velocity field along the trajectory is given by 
\begin{equation}
\frac{d}{dt}v(x(t),t) = \partial_t v(x,t) + J_x v(x,t) \dot{x}(t).
\end{equation}
Assume the velocity field satisfies the following local Lipschitz conditions:
\begin{equation}
\sup_x\|\partial_t v(x,t)_x\| \leq R_t, \quad \sup_x\|J_x v(x,t)\| \leq L, \quad \sup_x \|v(x,t)\| \leq V_t.     
\end{equation}
Then, we get 
\begin{equation}
\left\|\frac{d}{dt}v(x(t),t)\right\| \leq R_t + L\|v(x(t),t)\| := M_t.
\end{equation}
Let $p_{t+\Delta t}^*$ denote the true distribution at time $t+\Delta t$, and $p_{t+\Delta t}$ its Euler approximation. By substituting into \cref{eq:wasserstein_bound}, we obtain the local Wasserstein distance bound as 
\begin{equation}
 W_2(p_{t+\Delta t}^*,p_{t+\Delta t}) \leq \frac{\Delta t^2}{2} \left(\mathbb{E}\left[\sup_{s\in[t,t+\Delta t]}\|\frac{d}{ds}v(x^*(s),s)\|^2\right]\right)^{1/2}\\ \leq \frac{\Delta t^2}{2}M_t.
\end{equation}
Let 
\begin{equation}
\mu_i := p_{t_i}^*, \quad \nu_i := p_{t_i}^E.
\end{equation}
Define the exact flow operator $\Phi_i^*$ and Euler flow operator $\Phi_i^E$ such that 
\begin{equation}
\mu_{i+1} = \Phi_i^*(\mu_i), \quad \nu_{i+1} = \Phi_i^E(\nu_i).
\end{equation}
We introduce an auxiliary distribution
\begin{equation}
\tilde{\nu}_{i+1} := \Phi_i^E(\mu_i),
\end{equation}
which corresponds to applying one Euler step starting from the true distribution. Hence, by the triangle inequality, 
\begin{equation}
\label{eq:triangle_inequality}
W_2(\mu_{i+1},\nu_{i+1}) \leq W_2(\mu_{i+1},\tilde{\nu}_{i+1}) + W_2(\tilde{\nu}_{i+1},\nu_{i+1}).
\end{equation}
The first term corresponds to the local truncation error:
\begin{equation}
\label{eq:local_truncation_error}
\delta_i := W_2(\Phi_i^*(\mu_i),\Phi_i^E(\mu_i)) \leq \frac{\Delta t^2}{2}\bar{M}_i,
\end{equation}
where $\bar{M}_i := \sup_{s \in [t_i,t_{i+1}]}M_s$.

For the second term, note that the Euler map 
\begin{equation}
T_i(x) = x + \Delta t_i v(x,t_i),
\end{equation}
is $(1 + \Delta t_i L)$-Lipschitz. Therefore,
\begin{equation}
W_2(\tilde{\nu}_{i+1},\nu_{i+1}) = W_2(\Phi_i^E(\mu_i),\Phi_i^E(\nu_i)) \leq (1+\Delta t_iL)W_2(\mu_i,\nu_i).
\end{equation}
Combining \cref{eq:triangle_inequality} and \cref{eq:local_truncation_error} yields the recursive inequality
\begin{equation}
W_2(\mu_{i+1},\nu_{i+1}) \leq W_2(\Phi_i^*(\mu_i),\Phi_i^E(\nu_i)) + (1+\Delta t_iL)W_2(\mu_i,\nu_i) \\ = \delta_i + (1+\Delta t_iL)W_2(\mu_i,\nu_i) .
\end{equation}
Unrolling the recursion gives 
\begin{equation}
W_2(\mu_N,\nu_N) \leq \sum_{i=0}^{N-1}\left(\delta_i \prod_{j=i+1}^{N-1}(1+\Delta t_j L)\right).
\end{equation}
Using the inequality $1+x \leq e^x$,
\begin{equation}
\prod_{j=i+1}^{N-1}(1+\Delta t_j L) \leq \prod_{j=i+1}^{N-1}\exp(\Delta t_jL) \leq \exp\left(L\sum_{j=0}^{N-1} \Delta t_j\right) \\ \leq \exp\left(L\sum_{j=0}^{N-1} \Delta t_j\right) = e^{Lt_0}.
\end{equation}
Hence, the upper bound of the total Wasserstein distance can be derived as 
\begin{equation}
W_2(\mu_N, \nu_N) = W_2(p_{t_N}^*,p_{t_N}^E) \leq e^{Lt_0}\sum_{i=0}^{N-1} \delta_i \leq e^{Lt_0}\sum_{i=0}^{N-1} \frac{\Delta t_i^2}{2}\bar{M}_i,    
\end{equation}
which completes the proof.

\subsection{Energy-Optimized Geodesic Path for Weighted Incremental Cost}
\begin{proposition}
\label{prop:weighted_sum}
Assume the incremental cost admits the quadratic form 
\begin{equation}
\label{eq:quadratic_form}
L(t_i,t_{i+1}) \approx \delta(t_i)\Delta t_i^2, \quad \Delta t_i := t_i - t_{i+1},
\end{equation}
for some nonnegative function $\delta(\cdot)$.
Then the weighted cost can be written as 
\begin{equation}
\tilde{L}(t_i, t_{i+1}) \approx \tilde{\delta}(t_i)\Delta t_i^2, \quad \tilde{\delta}(t) := w(t)\delta(t).
\end{equation}
Consequently, the optimal schedule for the weighted objective traverses the diffusion path a constant speed with respect to the weighted geodesic length
\begin{equation}
\tilde{\Gamma} := \sum_{i=0}^{N-1} \sqrt{\tilde{L}(t_i,t_{i+1})} = \sum_{i=0}^{N-1} \sqrt{w(t_i)}\sqrt{L(t_i,t_{i+1})},
\end{equation}
i.e., 
\begin{equation}
\sqrt{\tilde{L}(t_i, t_{i+1})} \approx \frac{\tilde{\Gamma}}{N}.
\end{equation}
\end{proposition}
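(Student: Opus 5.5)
The first claim follows by direct substitution. Multiplying the quadratic approximation \eqref{eq:quadratic_form} by $w(t_i)\ge 0$ gives
\[
\tilde L(t_i,t_{i+1}) = w(t_i)L(t_i,t_{i+1}) \approx w(t_i)\delta(t_i)\Delta t_i^2 = \tilde\delta(t_i)\Delta t_i^2 .
\]
Because $w\ge 0$ and $\delta\ge 0$, the new coefficient $\tilde\delta$ is again nonnegative. The weighted cost therefore has exactly the same quadratic structure that \cite{williams2024score} assumes for the unweighted cost. The same identity, after taking square roots, gives $\sqrt{\tilde L}=\sqrt{w(t_i)}\sqrt{L}$, which is the stated form of $\tilde\Gamma$.

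For the constant-speed claim, I would not simply cite \cite{williams2024score}. Instead I would give the short self-contained argument, since it makes clear that only the quadratic structure matters. Set $a_i:=\sqrt{\tilde L(t_i,t_{i+1})}\approx\sqrt{\tilde\delta(t_i)}\,\Delta t_i$. The energy to be minimized is $\tilde L_{\mathrm{tot}}=\sum_i a_i^2$. The weighted geodesic length $\tilde\Gamma=\sum_i a_i$ approximates the Riemann sum of $\int_0^{t_0}\sqrt{\tilde\delta(t)}\,dt$. To leading order in the mesh size, $\tilde\Gamma$ is therefore independent of how the $N$ steps are placed.

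Cauchy--Schwarz then gives
\[
\tilde\Gamma^2=\Big(\sum_{i=0}^{N-1}a_i\Big)^2\le N\sum_{i=0}^{N-1}a_i^2 = N\,\tilde L_{\mathrm{tot}},
\]
so $\tilde L_{\mathrm{tot}}\ge\tilde\Gamma^2/N$. Equality holds exactly when all $a_i$ are equal, that is, when $a_i=\tilde\Gamma/N$. Choosing $t_i^*$ so that $\int_{t_{i+1}^*}^{t_i^*}\sqrt{\tilde\delta}\,dt=\tilde\Gamma/N$ attains this, and it is admissible because the map $t\mapsto\int_0^t\sqrt{\tilde\delta}$ is continuous and monotone. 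Equivalently, one can use a Lagrange multiplier on $\sum_i a_i^2$ with the constraint $\sum_i\Delta t_i=t_0$. Stationarity gives $\tilde\delta(t_i)\Delta t_i=\lambda$, so $a_i=\lambda/\sqrt{\tilde\delta(t_i)}\cdot\sqrt{\tilde\delta(t_i)}$ is constant up to the variation of $\tilde\delta$ within a step.

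The main obstacle is making precise the claim that $\tilde\Gamma$ is schedule-invariant. This holds only to leading order, and the error depends on the regularity of $\tilde\delta$; the left-endpoint evaluation $\tilde\delta(t_i)$ also introduces an $O(\max_i\Delta t_i)$ discrepancy. I would handle this by assuming $\tilde\delta$ is continuous, or of bounded variation, on $[0,t_0]$, so that the Riemann-sum errors vanish as the mesh goes to zero. This is consistent with the ``$\approx$'' in the statement. A further subtlety is that near $\sigma\to 0$ the weight $w=g^2$ blows up like $(\sigma/\sigma_{\max})^{-2q}$. I would require integrability of $\sqrt{\tilde\delta}$ on $[0,t_0]$, which in practice is guaranteed because the schedule stops at $\sigma_{\min}>0$.
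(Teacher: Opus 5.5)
Your first step, multiplying the quadratic form by $w(t_i)\ge 0$, is exactly the paper's proof. For the constant-speed claim the paper only says ``apply the same proof with $\delta$ replaced by $\tilde\delta$'' and cites \cite{williams2024score}. You instead write the argument out: $\tilde\Gamma=\sum_i a_i$ is a Riemann sum for $\int_0^{t_0}\sqrt{\tilde\delta}\,dt$, so it is schedule-independent to leading order, and Cauchy--Schwarz gives $\sum_i a_i^2\ge\tilde\Gamma^2/N$ with equality iff all $a_i$ coincide. This is correct and is the standard energy-versus-length argument. What your version buys over the citation is self-containedness and an explicit statement of what is needed of $\tilde\delta$: nonnegativity, regularity, and integrability despite $w=(\sigma/\sigma_{\max})^{-2q}$ blowing up as $\sigma\to0$.

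One point to tighten: schedule-invariance of $\tilde\Gamma$ holds only for fine meshes, so you must also exclude coarse competitors. In the left-endpoint model, a long step taken from a point where $\tilde\delta$ is nearly zero is almost free. A lower bound $\tilde\delta\ge c>0$ suffices. A step of length $h$ then costs at least $ch^2$, while the constant-speed schedule costs about $\bigl(\int_0^{t_0}\sqrt{\tilde\delta}\,dt\bigr)^2/N$. Hence the optimal mesh is $O(N^{-1/2})$, and your Riemann-sum argument applies.

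The ``equivalent'' Lagrange-multiplier derivation, however, is wrong and should be deleted or redone. From $\tilde\delta(t_i)\Delta t_i=\lambda$ you get $a_i=\sqrt{\tilde\delta(t_i)}\,\Delta t_i=\lambda/\sqrt{\tilde\delta(t_i)}$, which is not constant. The product $\lambda/\sqrt{\tilde\delta(t_i)}\cdot\sqrt{\tilde\delta(t_i)}$ you wrote is an algebra slip, and the discrepancy is a global, leading-order effect rather than a within-step variation.

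The root cause is treating the coefficients $\tilde\delta(t_i)$ as fixed, although they depend on the decision variables. Differentiating $E=\sum_i\tilde\delta(t_i)(t_i-t_{i+1})^2$ with respect to an interior $t_k$ gives
\[
\tilde\delta(t_{k-1})\Delta t_{k-1}-\tilde\delta(t_k)\Delta t_k=\tfrac12\,\tilde\delta'(t_k)\,\Delta t_k^2 .
\]
The right-hand side is of the same order $O(N^{-2})$ as the left, so it cannot be dropped. Write $t=t(r)$ with $r\in[0,1]$. The continuum limit is then the Euler--Lagrange equation of $\int_0^1\tilde\delta(t)\,(dt/dr)^2\,dr$, whose first integral $\tilde\delta(t)\,(dt/dr)^2=\mathrm{const}$ is precisely the constant-speed condition $a_i\approx\tilde\Gamma/N$. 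The naive multiplier condition instead gives $\tilde\delta(t)\,|dt/dr|=\mathrm{const}$, which is a different schedule whenever $\tilde\delta$ is non-constant.
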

\begin{proof}
By \cref{eq:weighted_cost} and \cref{eq:quadratic_form}, 
\begin{align}
\tilde{L}(t_i, t_{i+1}) &= w(t_i)L(t_i, t_{i+1}) \\ &\approx w(t_i)\delta(t_i)\Delta t_i^2 = \tilde{\delta}(t_i)\Delta t_i^2.
\end{align}
hence, the proof. The remainder follows by applying the same proof with $\delta$ replaced by $\tilde{\delta}$ \cite{williams2024score}.
\end{proof}

\section{Further Implementation Details}
\subsection{Hyperparameter Settings}

We provide the search grids for hyperparameters used throughout our experiments. For the step schedule based adaptive solver, we vary the curvature threshold $\tau_k$
across datasets and timestep schedules. We observe that VP and VE models exhibit similar performance under the same threshold configuration on standard benchmarks, with the exception of AFHQv2, and therefore share the same $\tau_k$ values. For AFHQv2, we select $\tau_k = 2 \times 10^{-4}$ for VP parameterization and $\tau_k = 1 \times 10^{-3}$ for VE parameterization, which yields the best FID performance.

\begin{table}[htbp]
\centering
\caption{Parameter search grid for threshold $\tau_k$ for step schedule based adaptive solver selection.}
\label{tab:tauk_grid}
\renewcommand{\arraystretch}{1.2}
\setlength{\tabcolsep}{5pt}
\begin{tabular}{|>{\centering\arraybackslash}p{0.24\textwidth}|c|c|c|c|>{\centering\arraybackslash}p{0.24\textwidth}|}
\hline
Timestep Schedule
& CIFAR-10 & FFHQ & AFHQv2 & ImageNet & Grid search \\
\hline
\multirow{2}{*}{EDM}
& \multirow{2}{*}{$2\times10^{-4}$} & \multirow{2}{*}{$1\times10^{-4}$} & \multirow{2}{*}{$1\times10^{-3}$} & \multirow{2}{*}{$1\times10^{-4}$}
& \multirow{2}{*}{$\{2,5,10,20,50,100\}\times10^{-5}$} \\
& & & & & \\   
\hline
\multirow{2}{*}{SDM (Adaptive Scheduling)}
& \multirow{2}{*}{$2\times10^{-4}$}
& \multirow{2}{*}{$1\times10^{-4}$}
& $2\times10^{-4} (\text{VP})$
& \multirow{2}{*}{$1\times10^{-4}$}
& \multirow{2}{*}{$\{2,5,10,20,50,100\}\times10^{-5}$} \\
&  &  & $1\times10^{-3} (\text{VE})$ &  &  \\
\hline
\end{tabular}
\end{table}

For adaptive timestep scheduling, we tune four hyperparameters, $\eta_{\min}$, $\eta_{\max}$, $p$, and $q$, which jointly control the overall Wasserstein error tolerance and its allocation across diffusion timesteps. We find that CIFAR-10 is relatively sensitive to the choice of these parameters, necessitating explicit tuning across different training configurations. In contrast, the remaining benchmarks (FFHQ, AFHQv2, and ImageNet) demonstrate
robust performance across the hyperparameter grid.

\begin{table}[htbp]
\centering
\caption{Parameter search grid for Wasserstein error tolerance and $N$-step resampling parameters for CIFAR-10.}
\label{tab:hyperparam_grid}
\renewcommand{\arraystretch}{1.2}
\setlength{\tabcolsep}{10pt}

\begin{tabular}{|c|c|c|c|c|c|}
\hline
\multirow{2}{*}{Parameter} 
& \multicolumn{4}{c|}{CIFAR-10} 
& \multirow{2}{*}{Grid search} \\
\cline{2-5}
& Uncond. VP & Uncond. VE & Cond. VP & Cond. VE & \\
\hline
$\eta_{\text{min}}$ & 0.01 & 0.01 & 0.01 & 0.02 & 0.01, 0.02, 0.03, 0.04, 0.05 \\
$\eta_{\text{max}}$ & 0.40 & 0.40 & 0.40 & 0.10 & 0.10, 0.20, 0.30, 0.40, 0.50 \\
$p$ & 1.0 & 1.0 & 1.0 & 1.0 & 0.8, 1.0, 1.2 \\
$q$ & 0.1 & 0.25 & 0.1 & 0.25 & 0.1, 0.25\\
\hline
\end{tabular}
\end{table}

\clearpage

\section{Additional Results}
\label{sec:additional_results}
In this section, we provide additional results for our proposed SDM sampler.

\subsection{Quantitative Results for Conditional Generation}
\begin{table*}[htbp]
\caption{Quantitative results on conditional generation for CIFAR-10 $32\times32$ and ImageNet $64 \times 64$, measured by FID and NFE. Results for the adaptive solver are based on the step scheduling function
$\Lambda(t)$ with the optimal threshold parameter $\tau_k$.
Adaptive scheduling is applied to Euler, Heun, and SDM-based adaptive solvers. The best-performing configurations compared to the corresponding baselines are
highlighted in \textbf{bold}, while the overall best results across all methods are highlighted in \underline{\textbf{bold}}. Overall, the proposed SDM sampler achieves superior performance in both sample quality and computational efficiency over all baselines.}
\label{table:results_conditional}
\centering
\renewcommand{\arraystretch}{1.15}
\setlength{\tabcolsep}{4pt}

\begin{tabular}{
    p{0.13\textwidth} @{\hspace{10pt}} l p{0.06\textwidth}
    *{3}{>{\centering\arraybackslash}p{0.07\textwidth}}
}
    \toprule
        \multicolumn{3}{c}{} &
        \multicolumn{2}{c}{Conditional} &
        \multicolumn{1}{c}{Conditional} \\
        \multicolumn{3}{c}{} &
        \multicolumn{2}{c}{CIFAR-10 $32 \times 32$} &
        \multicolumn{1}{c}{ImageNet $64 \times 64$} \\
        \cmidrule(lr){4-5}\cmidrule(lr){6-6}
        \multicolumn{3}{c}{} & VP & VE & \multicolumn{1}{c}{ADM} \\
        \midrule
        \textbf{Solver} & \textbf{Schedule} & & & & \\
        \midrule
    
        \multirow{4}{*}{Euler}
            & EDM $(\rho=7)$ & & 7.09 & 6.75 & \multicolumn{1}{c}{3.48} \\
            & COS \cite{williams2024score} & & 7.05 & 6.66 & \multicolumn{1}{c}{3.74} \\
            & SDM (Adaptive Scheduling) & & \textbf{6.10} & \textbf{5.12} & \multicolumn{1}{c}{\textbf{2.32}} \\
        \cmidrule{2-6}
            & & NFE & 17 & 17 & \multicolumn{1}{c}{255} \\
        \midrule
        \addlinespace[2pt]
    
        \multirow{4}{*}{Heun}
            & EDM $(\rho=7)$ & & \textbf{1.82} & \textbf{1.78} & \multicolumn{1}{c}{1.45} \\
            & COS \cite{williams2024score} & & 1.92 & 1.87 & \multicolumn{1}{c}{\underline{\textbf{1.40}}} \\
            & SDM (Adaptive Scheduling) & & 1.92 & 1.92 & \multicolumn{1}{c}{2.25} \\
        \cmidrule{2-6}
            & & NFE & 35 & 35 & \multicolumn{1}{c}{511} \\
        \midrule
        \addlinespace[2pt]
    
        \multirow{5}{*}{SDM}
            & EDM $(\rho=7)$ & &
            \underline{\textbf{1.81}} &
            \underline{\textbf{1.77}} &
            \multicolumn{1}{c}{\textbf{1.45}} \\
            \cmidrule{2-6}
            & & NFE & 30 & 32 & \multicolumn{1}{c}{476} \\
            \cmidrule{2-6}
            & SDM (Adaptive Scheduling) & &
            \underline{\textbf{1.81}} &
            \underline{\textbf{1.77}} &
            \multicolumn{1}{c}{2.25} \\
            \cmidrule{2-6}
            & & NFE & 32 & 33 & \multicolumn{1}{c}{503} \\
    \bottomrule
\end{tabular}
\end{table*}

\subsection{Ablation Studies on $\Lambda(t)$}
We provide a quantitative comparison among different choices of the scheduler function $\Lambda(t)$ when used with the proposed adaptive solver. As shown in Table~5, the step scheduler consistently achieves the best FID across
all datasets and training configurations, while additionally requiring fewer function evaluations $(\text{NFE} < 2)$ per diffusion timestep compared to linear and cosine schedules $(\text{NFE} = 2)$. Based on these results, we adopt the step scheduler as the default choice in all experiments.

\begin{table*}[hbtp]
\caption{Ablation Studies on scheduler function $\Lambda(t)$ for adaptive solvers. Step schedule consistently achieves the best FID across all datasets and model configurations, while additionally requiring lower NFE per diffusion timestep compared to other schedules.}
\label{table:ablation_lambda}
\centering
\renewcommand{\arraystretch}{1.15}
\setlength{\tabcolsep}{4pt}

    \begin{tabular}{
    l
    *{9}{>{\centering\arraybackslash}p{0.065\textwidth}}
    }
        \toprule
            & \multicolumn{4}{c}{CIFAR-10 $32 \times 32$}
            & \multicolumn{2}{c}{FFHQ $64 \times 64$}
            & \multicolumn{2}{c}{AFHQv2 $64 \times 64$}
            & \multicolumn{1}{c}{ImageNet $64 \times 64$} \\
            & \multicolumn{2}{c}{Unconditional}
            & \multicolumn{2}{c}{Conditional}
            & \multicolumn{2}{c}{Unconditional}
            & \multicolumn{2}{c}{Unconditional}
            & \multicolumn{1}{c}{Conditional} \\
            \midrule
            $\Lambda(t)$ & VP & VE & VP & VE & VP & VE & VP & VE & \multicolumn{1}{c}{ADM} \\
            \midrule
            Step & \textbf{1.93} & \textbf{1.97} & \textbf{1.81} & \textbf{1.77} & \textbf{2.48} & \textbf{2.58} & \textbf{1.98} & \textbf{2.17} & \multicolumn{1}{c}{\textbf{1.45}}\\
            Linear & 2.73 & 2.90 & 2.61 & 2.46 & 3.13 & 3.35 & 2.21 & 2.39 & \multicolumn{1}{c}{1.69} \\
            Cosine & 2.74 & 3.00 & 2.57 & 2.45 & 3.16 & 3.39 & 2.20 & 2.37 & \multicolumn{1}{c}{1.61} \\
        \bottomrule
    \end{tabular}
\end{table*}

\subsection{Qualitative Results}
We present qualitative comparisons between the proposed SDM sampler and the baseline EDM sampler \cite{karras2022elucidating} across different datasets and training parameterizations.

\begin{figure*}[!b]
  \centering
  \captionsetup{width=0.90\textwidth}

  \begin{subfigure}[t]{0.38\textwidth}
    \centering
    {EDM (Heun), VP}\\
    \includegraphics[width=\linewidth]
    {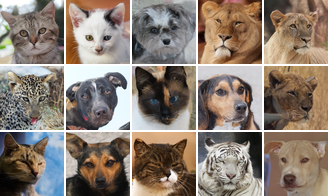}\\[-2pt]
    {FID=2.04, \; NFE=79}
  \end{subfigure}\hspace{12pt}
  \begin{subfigure}[t]{0.38\textwidth}
    \centering
    {EDM (Heun), VE}\\
    \includegraphics[width=\linewidth]{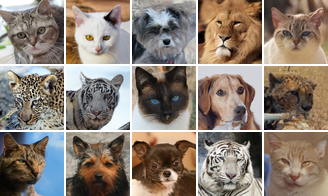}\\[-2pt]
    {FID=2.17, \; NFE=79}
  \end{subfigure}

  \vspace{6pt}

  \begin{subfigure}[t]{0.38\textwidth}
    \centering
    {SDM (Adaptive Solver), VP}\\
    \includegraphics[width=\linewidth]{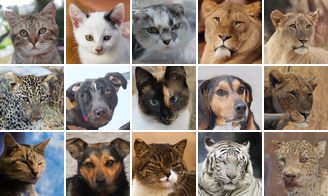}\\[-2pt]
    {FID=1.98, \; NFE=66}
  \end{subfigure}\hspace{12pt}
  \begin{subfigure}[t]{0.38\textwidth}
    \centering
    {SDM (Adaptive Solver), VE}\\
    \includegraphics[width=\linewidth]{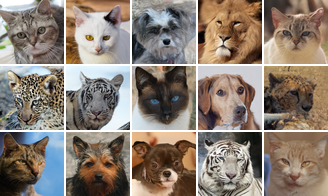}\\[-2pt]
    {FID=2.10, \; NFE=67}
  \end{subfigure}

  \vspace{6pt}

  \begin{subfigure}[t]{0.38\textwidth}
    \centering
    {SDM (Adaptive Scheduling), VP}\\
    \includegraphics[width=\linewidth]{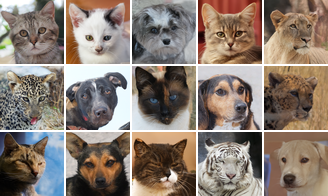}\\[-2pt]
    {FID=2.08, \; NFE=79}
  \end{subfigure}\hspace{12pt}
  \begin{subfigure}[t]{0.38\textwidth}
    \centering
    {SDM (Adaptive Scheduling), VE}\\
    \includegraphics[width=\linewidth]{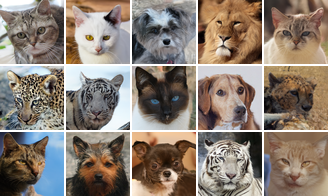}\\[-2pt]
    {FID=2.13, \; NFE=79}
  \end{subfigure}

  \vspace{6pt}

  \begin{subfigure}[t]{0.38\textwidth}
    \centering
    {SDM (Adaptive Solver + Scheduling), VP}\\
    \includegraphics[width=\linewidth]{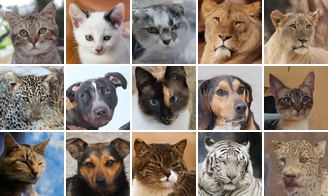}\\[-2pt]
    {FID=1.97, \; NFE=75}
  \end{subfigure}\hspace{12pt}
  \begin{subfigure}[t]{0.38\textwidth}
    \centering
    {SDM (Adaptive Solver + Scheduling), VE}\\
    \includegraphics[width=\linewidth]{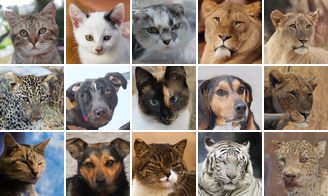}\\[-2pt]
    {FID=1.99, \; NFE=72}
  \end{subfigure}

  \caption{
    Qualitative comparison on AFHQv2 ($64\times64$) across VP (left) and VE (right) parameterizations. From top to bottom: EDM (Heun), SDM (adaptive solver), SDM (adaptive scheduling), and SDM (adaptive solver + scheduling). Each panel
    shows a $3 \times 5$ grid of generated samples; corresponding FID and NFE are reported below.
  }
  \label{fig:additional_results_afhqv2}
\end{figure*}

\begin{figure*}[hbtp]
  \centering
  \captionsetup{width=0.90\textwidth}

  \begin{subfigure}[t]{0.38\textwidth}
    \centering
    {EDM (Heun), VP}\\
    \includegraphics[width=\linewidth]
    {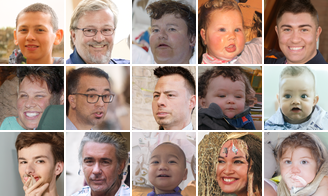}\\[-2pt]
    {FID=2.47, \; NFE=79}
  \end{subfigure}\hspace{12pt}
  \begin{subfigure}[t]{0.38\textwidth}
    \centering
    {EDM (Heun), VE}\\
    \includegraphics[width=\linewidth]{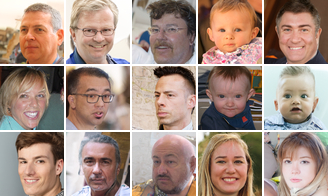}\\[-2pt]
    {FID=2.59, \; NFE=79}
  \end{subfigure}

  \vspace{6pt}

  \begin{subfigure}[t]{0.38\textwidth}
    \centering
    {SDM (Adaptive Solver), VP}\\
    \includegraphics[width=\linewidth]{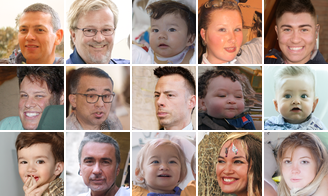}\\[-2pt]
    {FID=2.48, \; NFE=72}
  \end{subfigure}\hspace{12pt}
  \begin{subfigure}[t]{0.38\textwidth}
    \centering
    {SDM (Adaptive Solver), VE}\\
    \includegraphics[width=\linewidth]{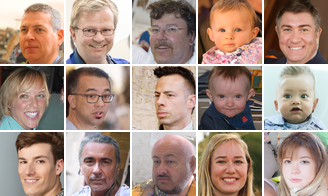}\\[-2pt]
    {FID=2.58, \; NFE=76}
  \end{subfigure}

  \vspace{6pt}

  \begin{subfigure}[t]{0.38\textwidth}
    \centering
    {SDM (Adaptive Scheduling), VP}\\
    \includegraphics[width=\linewidth]{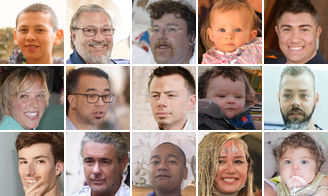}\\[-2pt]
    {FID=2.41, \; NFE=79}
  \end{subfigure}\hspace{12pt}
  \begin{subfigure}[t]{0.38\textwidth}
    \centering
    {SDM (Adaptive Scheduling), VE}\\
    \includegraphics[width=\linewidth]{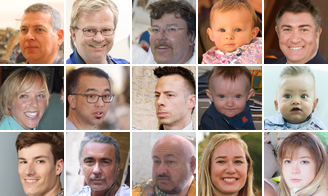}\\[-2pt]
    {FID=2.52, \; NFE=79}
  \end{subfigure}

  \vspace{6pt}

  \begin{subfigure}[t]{0.38\textwidth}
    \centering
    {SDM (Adaptive Solver + Scheduling), VP}\\
    \includegraphics[width=\linewidth]{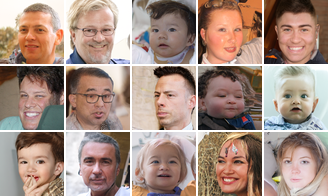}\\[-2pt]
    {FID=2.47, \; NFE=76}
  \end{subfigure}\hspace{12pt}
  \begin{subfigure}[t]{0.38\textwidth}
    \centering
    {SDM (Adaptive Solver + Scheduling), VE}\\
    \includegraphics[width=\linewidth]{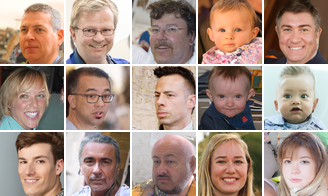}\\[-2pt]
    {FID=2.58, \; NFE=77}
  \end{subfigure}

  \caption{
    Qualitative comparison on FFHQ ($64\times64$) across VP (left) and VE (right) parameterizations. From top to bottom: EDM (Heun), SDM (adaptive solver), SDM (adaptive scheduling), and SDM (adaptive solver + scheduling). Each panel shows a $3 \times 5$ grid of generated samples; corresponding FID and NFE are reported below.
  }
  \label{fig:additional_results_ffhq}
\end{figure*}

\begin{figure*}[hbtp]
  \centering
  \captionsetup{width=0.90\textwidth}

  \begin{subfigure}[t]{0.38\textwidth}
    \centering
    {EDM (Heun), VP}\\
    \includegraphics[width=\linewidth]
    {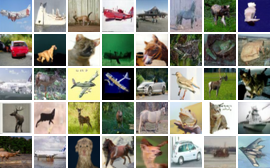}\\[-2pt]
    {FID=1.96, \; NFE=35}
  \end{subfigure}\hspace{12pt}
  \begin{subfigure}[t]{0.38\textwidth}
    \centering
    {EDM (Heun), VE}\\
    \includegraphics[width=\linewidth]{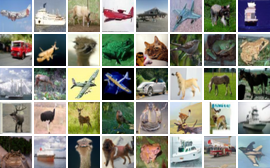}\\[-2pt]
    {FID=1.97, \; NFE=35}
  \end{subfigure}

  \vspace{6pt}

  \begin{subfigure}[t]{0.38\textwidth}
    \centering
    {SDM (Adaptive Solver), VP}\\
    \includegraphics[width=\linewidth]{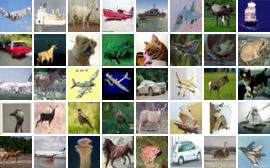}\\[-2pt]
    {FID=1.93, \; NFE=31}
  \end{subfigure}\hspace{12pt}
  \begin{subfigure}[t]{0.38\textwidth}
    \centering
    {SDM (Adaptive Solver), VE}\\
    \includegraphics[width=\linewidth]{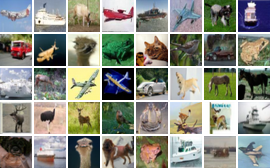}\\[-2pt]
    {FID=1.97, \; NFE=32}
  \end{subfigure}

  \vspace{6pt}

  \begin{subfigure}[t]{0.38\textwidth}
    \centering
    {SDM (Adaptive Scheduling), VP}\\
    \includegraphics[width=\linewidth]{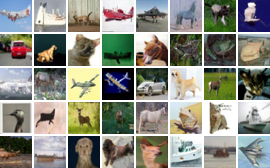}\\[-2pt]
    {FID=2.00, \; NFE=35}
  \end{subfigure}\hspace{12pt}
  \begin{subfigure}[t]{0.38\textwidth}
    \centering
    {SDM (Adaptive Scheduling), VE}\\
    \includegraphics[width=\linewidth]{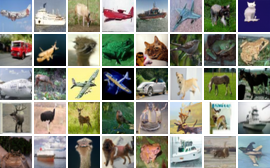}\\[-2pt]
    {FID=1.95, \; NFE=35}
  \end{subfigure}

  \vspace{6pt}

  \begin{subfigure}[t]{0.38\textwidth}
    \centering
    {SDM (Adaptive Solver + Scheduling), VP}\\
    \includegraphics[width=\linewidth]{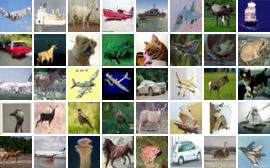}\\[-2pt]
    {FID=1.93, \; NFE=32}
  \end{subfigure}\hspace{12pt}
  \begin{subfigure}[t]{0.38\textwidth}
    \centering
    {SDM (Adaptive Solver + Scheduling), VE}\\
    \includegraphics[width=\linewidth]{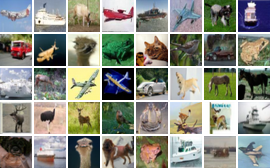}\\[-2pt]
    {FID=1.97, \; NFE=33}
  \end{subfigure}

  \caption{
    Qualitative comparison on unconditional generation for CIFAR-10 ($32\times32$) across VP (left) and VE (right) parameterizations. From top to bottom: EDM (Heun), SDM (adaptive solver), SDM (adaptive scheduling), and SDM (adaptive solver + scheduling). Each panel shows a $5 \times 8$ grid of generated samples; corresponding FID and NFE are reported below.
  }
  \label{fig:additional_results_cifar10_uncond}
\end{figure*}

\begin{figure*}[hbtp]
  \centering
  \captionsetup{width=0.90\textwidth}

  \begin{subfigure}[t]{0.38\textwidth}
    \centering
    {EDM (Heun), VP}\\
    \includegraphics[width=\linewidth]
    {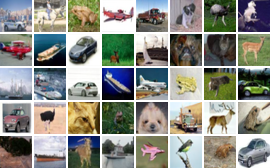}\\[-2pt]
    {FID=1.82, \; NFE=35}
  \end{subfigure}\hspace{12pt}
  \begin{subfigure}[t]{0.38\textwidth}
    \centering
    {EDM (Heun), VE}\\
    \includegraphics[width=\linewidth]{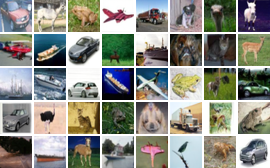}\\[-2pt]
    {FID=1.78, \; NFE=35}
  \end{subfigure}

  \vspace{6pt}

  \begin{subfigure}[t]{0.38\textwidth}
    \centering
    {SDM (Adaptive Solver), VP}\\
    \includegraphics[width=\linewidth]{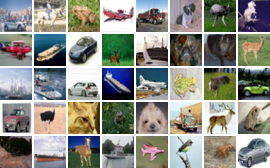}\\[-2pt]
    {FID=1.81, \; NFE=30}
  \end{subfigure}\hspace{12pt}
  \begin{subfigure}[t]{0.38\textwidth}
    \centering
    {SDM (Adaptive Solver), VE}\\
    \includegraphics[width=\linewidth]{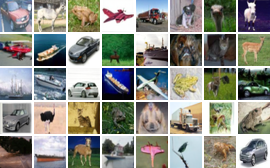}\\[-2pt]
    {FID=1.77, \; NFE=32}
  \end{subfigure}

  \vspace{6pt}

  \begin{subfigure}[t]{0.38\textwidth}
    \centering
    {SDM (Adaptive Scheduling), VP}\\
    \includegraphics[width=\linewidth]{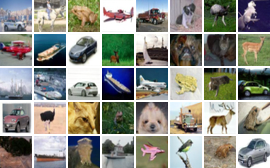}\\[-2pt]
    {FID=1.92, \; NFE=35}
  \end{subfigure}\hspace{12pt}
  \begin{subfigure}[t]{0.38\textwidth}
    \centering
    {SDM (Adaptive Scheduling), VE}\\
    \includegraphics[width=\linewidth]{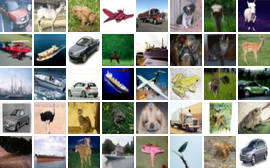}\\[-2pt]
    {FID=1.92, \; NFE=35}
  \end{subfigure}

  \vspace{6pt}

  \begin{subfigure}[t]{0.38\textwidth}
    \centering
    {SDM (Adaptive Solver + Scheduling), VP}\\
    \includegraphics[width=\linewidth]{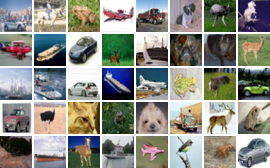}\\[-2pt]
    {FID=1.81, \; NFE=32}
  \end{subfigure}\hspace{12pt}
  \begin{subfigure}[t]{0.38\textwidth}
    \centering
    {SDM (Adaptive Solver + Scheduling), VE}\\
    \includegraphics[width=\linewidth]{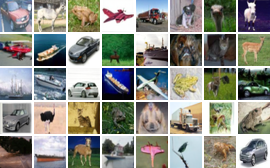}\\[-2pt]
    {FID=1.77, \; NFE=33}
  \end{subfigure}

  \caption{
    Qualitative comparison on conditional generation for CIFAR-10 ($32\times32$) across VP (left) and VE (right) parameterizations. From top to bottom: EDM (Heun), SDM (adaptive solver), SDM (adaptive scheduling), and SDM (adaptive solver + scheduling). Each panel
    shows a $5 \times 8$ grid of generated samples; corresponding FID and NFE are reported below.
  }
  \label{fig:additional_results_cifar10_cond}
\end{figure*}

\begin{figure*}[hbtp]
  \centering
  \captionsetup{width=0.90\textwidth}

  \begin{subfigure}[t]{0.35\textwidth}
    \centering
    {EDM (Euler), ADM}\\
    \includegraphics[width=\linewidth]
    {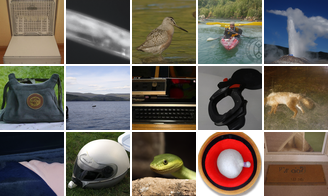}\\[-2pt]
    {FID=3.48, \; NFE=255}
  \end{subfigure}\hspace{12pt}
  \begin{subfigure}[t]{0.35\textwidth}
    \centering
    {SDM (Adaptive Scheduling), ADM}\\
    \includegraphics[width=\linewidth]{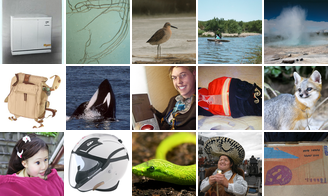}\\[-2pt]
    {FID=2.32, \; NFE=255}
  \end{subfigure}

  \vspace{6pt}

  \begin{subfigure}[t]{0.35\textwidth}
    \centering
    {SDM (Adaptive Solver), ADM}\\
    \includegraphics[width=\linewidth]{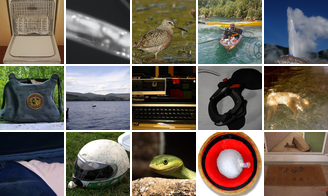}\\[-2pt]
    {FID=1.45, \; NFE=476}
  \end{subfigure}\hspace{12pt}
  \begin{subfigure}[t]{0.35\textwidth}
    \centering
    {SDM ((Adaptive Solver + Scheduling), ADM}\\
    \includegraphics[width=\linewidth]{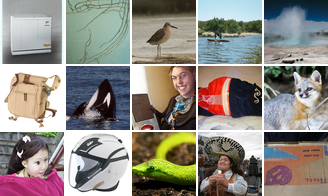}\\[-2pt]
    {FID=2.25, \; NFE=503}
  \end{subfigure}

  \caption{
    Qualitative comparison on ImageNet ($64\times64$) under ADM parameterizations. From top left to bottom right: EDM (Heun), SDM (adaptive scheduling), SDM (adaptive solver), and SDM (adaptive solver + scheduling). Each panel
    shows a $3 \times 5$ grid of generated samples; corresponding FID and NFE are reported below.
  }
  \label{fig:additional_results_imagenet}
\end{figure*}


\end{document}